\newcommand\titlestr{Target alignment in truncated kernel ridge regression}
\title{\titlestr}
\author{Arash A. Amini\thanks{aaamini@ucla.edu}}
\affil{Department of Statistics, UCLA}
\author{Richard Baumgartner\thanks{richard\_baumgartner@merck.com}}
\affil{Merck \& Co., Inc. Rahway, New Jersey, USA}
\author{Dai Feng\thanks{dai.feng@abbvie.com}}
\affil{Data and Statistical Sciences AbbVie Inc. North Chicago, IL, USA}
\newcommand{\fh}{\widehat{f}}
\newcommand{\fs}{f^*}
\newcommand{\ft}{\widetilde f}
\newcommand{\hil}{\mathbb H}
\newcommand{\hilt}{\widetilde \hil}
\newcommand{\omh}{\widehat \omega}
\newcommand{\omt}{\widetilde \omega}
\newcommand{\Ker}{\mathbb K}
\newcommand{\Kert}{\mathbb{\widetilde K}}
\newcommand{\yt}{\widetilde{y}}
\DeclareMathOperator{\mse}{MSE}
\newcommand{\Xc}{\mathcal X}
\newcommand{\hip}[1]{\ip{#1}_\hil}
\newcommand{\empnorm}[1]{\norm{#1}_n}
\newcommand{\empip}[1]{\ip{#1}_n}
	\newcommand{\wt}{\widetilde{w}}
\newcommand{\Kt}{\widetilde K}
\newcommand{\Gaml}{\Gamma_\lambda}
\DeclareMathOperator{\cov}{cov}
\newcommand\hilnorm[1]{\norm{#1}_\hil}
\newcommand\hiltnorm[1]{\norm{#1}_{\hilt}}
\newcommand\hiltip[1]{\ip{#1}_{\hilt}}
\newcommand\ind[1]{1{\{#1\}}}
\newcommand\xv{\bm x}
\newcommand\St{\widetilde S}
\newcommand\xis{\xi^*}
\newcommand\mseb{\overline{\mse}}
\newcommand{\ltwoip}[1]{\ip{#1}_{L^2}}
\newcommand{\ltwonorm}[1]{\norm{#1}_{L^2}}
\begin{document}

\maketitle

\begin{abstract}
\label{abs}
Kernel ridge regression (KRR) has recently attracted renewed interest due to its potential for explaining the transient effects, such as double descent, that emerge during neural network training. In this work, we study how the alignment between the target function and the kernel affects the performance of the KRR. We focus on the truncated KRR (TKRR) which utilizes an additional parameter that controls the spectral truncation of the kernel matrix. We show that for polynomial alignment, there is an \emph{over-aligned} regime, in which TKRR can achieve a faster rate than what is achievable by full KRR. The rate of TKRR can improve all the way to the parametric rate, while that of full KRR is capped at a sub-optimal value. This shows that target alignemnt can be better leveraged by utilizing spectral truncation in kernel methods.
We also consider the bandlimited alignment setting and show that the regularization surface of TKRR can exhibit transient effects including multiple descent and non-monotonic behavior. Our results show that there is a strong and quantifable relation between the shape of the \emph{alignment spectrum} and the generalization performance of kernel methods, both in terms of rates and in finite samples.

\end{abstract}

\section{Introduction}
\label{Intro}
Kernel methods have become a time-proven popular mainstay in machine learning~\cite{steinwart2008support,wahba1990spline,scholkopf2001learning}. Implicit transformation of a learning problem via a suitable kernel with subsequent development of regularized linear models in the associated reproducing kernel Hilbert space (RKHS) is amenable to applications and allow for investigation of theoretical properties of the kernel methods \cite{cui2021generalization}.  Recently, there has been a revived interest and increased research into the kernel methods \cite{jacot2018neural,kobak2020theoptimal,bordelon2020spectrum, cui2021generalization,canatar2021spectral}. These efforts have been motivated by the connections between neural network and  kernel learning \cite{jacot2018neural}. In particular, the discovery of transient effects such as double descent that emerge in  neural networks training \cite{belkin2019reconciling,loog2020abrief} has attracted a lot of attention.    

In this paper, we study the effect of target-model alignment on the performance of kernel ridge regression (KRR) estimators. KRR is a well-known nonparametric regression estimator with desirable theoretical performance guarantees. We focus on a generalization of the KRR, which we refer to as the Truncated Kernel Ridge Regression (TKRR), that introduces an additional parameter, $r$, for truncating the eigenvalues of the underlying kernel matrix. TKRR is often used as an approximation to the full KRR to scale the computations to large sample sizes. TKRR itself is often approximated in practice by various sampling and sketching schemes~\cite{williams2001using,zhang2008improved,kumar2009ensemble,li2010making,talwalkar2014matrix,alaoui2015fast,yang2017randomized}. One of our main contributions in this paper is that TKRR can, in fact, improve the statistical rate of convergence in cases where the target function is well-aligned with the underlying RKHS. Roughly speaking, target alignment refers to the decay rate of the coefficients of the target function in an orthonormal basis of the eigenfunctions of the kernel operator.

In addition to improved rates for TKRR, we also show that target alignment generally predicts the generalization performance of the full KRR as well as TKRR. We also explore the regularization surface of the KRR estimators in both the regularization and truncation parameters, and show some surprising behavior like non-monotonicity, multiple descent and phase transition phenomena.

\textbf{Our main contributions}
\begin{enumerate}[(i)]\itemsep=0ex%
    \item We show that TKRR itself can be viewed as a full KRR estimator in a smaller RKHS embedded in the original RKHS (Propostion~\ref{prop:TKRR:equivalence}). This shows that the ad-hoc truncation that is often done for computational reasons can itself be viewed as a proper KRR estimator over a different RKHS.
    \item We introduce target alignment (TA) scores in Section~\ref{sec:target:align} and derive an exact expression for the Mean Sqaured Error (MSE) of TKRR based on the TA scores and eigenvalues of the kernel matrix (Theorem~\ref{thm:exact:mse}).
    \item For the case of polynomially-decaying TA scores, 
    we first derive a simplified expression for the MSE (Eqn.~\eqref{eq:poly:rate:mse}) which is within constant factors of the exact expression. We then identify four target alignment regimes: Under-aligned, just-aligned, weakly-aligned and over-aligned. We  show that TKRR can achieve 
	a fast rate %
	of convergence
	in the \emph{over-aligned} regime. In contrast, the rate of the full KRR is capped to a sub-optimal level, that of the best achievable in the weakly-aligned regime %
	(Theorem~\ref{thm:poly:rate}).
    
    \item For bandlimited TA scores, we provide more refined finite-sample analysis. In particular, we show that the regularization curve as a function of the truncation level $r$ is in general non-monotonic. We also show that TA spectra that are narrower and shifted towards lower indices correspond to lower generalization errors (Proposition~\ref{prop:band:lim}).
    
\end{enumerate}

We provide experiments verifying the multiple-descent and non-monotonic behavior of the regularization curves as well as the improved rate of Theorem~\ref{thm:poly:rate} (Section~\ref{subsec:polyAli}). The MSE in our results, is equivalent to the excess (empirical) generalization error. So all the conclusions, and in particular the rates, are valid for the generalization error. Overall, our results provide strong theoretical and experimental support for the idea that alignment of the target function with the eigenfunctions of the kernel is the key predictor of generalization performance of KRR estimators. It is worth noting that our main theoretical results are non-asymptotic and at times exact.

\textbf{Related work.}
 Kernel ridge regression (KRR) has been widely studied in the literature in the non-parametric statistical framework \cite{wasserman2006all,tsybakov2009intro}. Various aspects of the KRR were elucidated. For example, the error rates of the KRR were studied in \cite{caponnetto2007optimal} and more recently benign overfitting \cite{bartlett2020benign} and related double descent effects in \cite{hastie2022surprises}.
 
 The notion of kernel-target alignment was first introduced in \cite{cristianini2001onkernel} for classification. In \cite{canatar2021spectral}, the generalization error of the KRR was investigated using the tools of statistical physics theory which allows for derivation of the bias-variance decomposition of the generalization error and elucidation of non-monotonic transitional effects reflected by the learning curves.  In this work the authors have also demonstrated that the interplay between spectral bias (expressed as kernel eigenvalue decay) and kernel-target alignment or task-model alignment are useful for the characterization of the kernel compatibility with the problem.  Building on the \cite{canatar2021spectral}, the concept of kernel-target alignment and spectral bias was referred to as source and capacity, respectively in \cite{cui2021generalization}. Similarly to \cite{canatar2021spectral}, the transitions that emerge during a crossover from noiseless to noisy regime corresponding to fast vs slow decay of the learning curve, respectively were studied in \cite{cui2021generalization}. Moreover, in  \cite{amini2021tkrr} it has been shown that  the truncated KRR outperforms full KRR over the unit ball of the RKHS. The work~\cite{cui2021generalization} is perhaps closest to ours followed by~\cite{amini2021tkrr}. We make more detailed comparisons with these after Theorem~\ref{thm:poly:rate}. In particular, we show that the rate obtained in~\cite{cui2021generalization} can be improved by using TKRR. In addition, in Appendix~\ref{app:rkhs:background}, we provide a more unified view of the settings of \cite{caponnetto2007optimal, cui2021generalization} which can help clarify how different target alignment assumptions relate to each other. It is worth noting that approximate forms of TKRR via sampling and sketching have been analyzed extensively in the literature~\cite{rudi2015less,yang2017randomized, cortes2010impact,yang2012nystrom,jin2013improved,alaoui2015fast,bach2013sharp}, but this line of work at best proves that TKRR achieves the same minimax rate achievable with full KRR, over the RKHS ball. What we show in this paper is that TKRR can achieve much faster rates under proper target alignment. %

\section{Kernel Ridge Regression}
\label{sec:KRR}
The general nonparametric regression problem %
can be stated as 
\begin{align}\label{eq:mod:1}
	y_i = \fs(x_i) + w_i, \;i = 1,\dots,n, \quad \ex w = 0,\; \cov(w) = \sigma^2I_n
\end{align}
where $w = (w_i) \in \reals^n$ is a noise vector and $\fs: \Xc \to \reals$ is the function of interest to be approximated from the noisy observations $y = (y_i)$. Here, $\Xc$ is the space to which the \emph{covariates} $\bm x = (x_i)$ belong. We mainly focus on the fixed design regression where the covariates are assumed to be deterministic. A natural estimator is the kernel ridge regression (KRR), defined as the solution of the following optimization problem:
\begin{align}\label{eq:krr:f:1}
	\fh_{n,\lambda} := \argmin_{f \in \hil} 
	\;\frac1{n} \sum_{i=1}^n (y_i - f(x_i))^2 + \lambda \norm{f}_\hil^2,
\end{align}
where $\lambda > 0$ is a regularization parameter and $\hil$ is the underlying RKHS.  By the representer theorem~\cite{kimeldorf1971some}, this problem can be reduced to a finite-dimensional one:
\begin{align}\label{eq:krr:omg:1}
	\min_{\omega \,\in\, \reals^n} \; 
	\frac1{n} \vnorm{y- \sqrt n K \omega}^2 + \lambda
	\omega^T K \omega, \quad \text{where}\quad  K =\frac1n \big( \Ker(x_i,x_j)\big) \in \reals^{n \times n}
\end{align} 
is the (normalized empirical) kernel matrix. 
Let us define the following operators
\begin{align}
	S_{\xv}(f) = \frac1{\sqrt n} (f(x_1),f(x_2),\dots,f(x_n)), \quad 
	S_{\xv}^*( \omega ) = \frac{1}{\sqrt n} \sum_{j=1}^n \omega_j \Ker(\cdot, x_j)
\end{align}
for $f \in \hil$ and $\omega \in \reals^n$. We refer to $S_{\xv}$ as the \emph{sampling operator}.  As the notation suggests, $S_{\xv}^*$ is the adjoint of $S_{\xv}$ as an operator from $\hil$ to $\reals^n$. We have $\fh_{n,\lambda} = S_{\xv}^*( \omh) $ where $\omh$ is the solution of~\eqref{eq:krr:omg:1}. We note that $S_{\xv} (S^*_{\xv} (\omega)) = K \omega$, i.e., $K$ is the matrix representation of $S_{\xv} S^*_{\xv} : \reals^n \to \reals^n$.

\textbf{Notation.} We write $\empip{f,g} = \frac1n \sum_{i=1}^n f(x_i)g(x_i)$ and $\empnorm{f} = \sqrt{\empip{f,f}}$ for the empirical inner product and norm, respectively. Note that $\empnorm{f} = \norm{S_{\xv}(f)}_2$ where $\norm{\cdot}_2$ is the vector $\ell_2$ norm. For two functions, $f$ and $g$, we write $f \lesssim g$ if there is constant $C > 0$ such that $f \le C g$, and we write $f \asymp g$ if $f \lesssim g$ and $g \lesssim f$.

\section{Truncated Kernel Ridge Regression}\label{sec:tKRR}
As mentioned earlier, spectral truncation of the kernel matrix has been mainly considered as a computational device in approximating the full KRR estimator. Here, we first argue that it can be reformulated as a KRR estimator over a simpler RKHS. To facilitate future developments, we define this new RKHS based on the given training data $\xv = (x_1,\dots,x_n)$.

Let $K = \sum_{k=1}^n \mu_k u_k u_k^T$ be the eigendecomposition of the kernel matrix $K$ where $\mu_1 \ge \mu_2 \ge \dots \ge \mu_n \ge 0$ are the eigenvalues of $K$ and $\{u_k\} \subset \reals^n$ are the corresponding eigenvectors. We assume for simplicity that $\mu_n> 0$, that is:
\begin{assum}\label{assum:inv}
	The (exact) kernel matrix $K = K(\xv)$ is invertible.
\end{assum}
This is very mild assumption for any infinite-dimensional $\hil$.
 Let $u_{ki}$ be the $i$th coordinate of $u_k$. We would like to find functions $\{\psi_k\} \subset \hil$ such that $\psi_k(x_i) = \sqrt n\, u_{ki}$ for all $k, i \in [n]$. If $\dim(\hil) = \infty$, there are in general many such functions for each $k$. We pick the one that minimizes the $\hil$-norm: %
 \begin{align*}
 	\psi_k := \argmin\ \bigl\{ \hilnorm{\psi_k} : \psi \in \hil, \; S_{\xv}(\psi) = \sqrt n\, u_k\bigr\}.
 \end{align*}
 Under Assumption~\ref{assum:inv}, the above problem has a unique solution (i.e., the interpolation problem $S_{\xv}(\psi) = \sqrt n u$ has a unique minimum-norm solution for any $u \in \reals^n$.). Since $\{u_k\}$ is an orthonormal basis of $\reals^n$, by construction, $\empip{\psi_k, \psi_\ell} = \ind{k = \ell}$. Consider the set of functions %
 \begin{align*}
 	\hilt := \Bigl\{ \sum_{k=1}^r \alpha_k \psi_k \mid   \alpha_1,\dots,\alpha_r \in \reals \Bigr\}
 \end{align*}
	equipped with the inner product defined via $\hiltip{\psi_k, \psi_\ell} =  \frac1{\mu_k} \ind{k = \ell}$ and extended to the whole of $\hilt$ by bilinearity. (Equivalently, if $f = \sum_{k=1}^r \alpha_k \psi_k$ and $g = \sum_{k=1}^r \beta_k \psi_k$, define $\hiltip{f,g} = \sum_k\alpha_k \beta_k/\mu_k$.) Then, $\hilt$ is an $r$-dimensional RKHS, with reproducing kernel (Appendix~\ref{app:verify:reproduce})
	\begin{align}\label{eq:Kt:def}
		\Kert(x,y) := \sum_{k=1}^r \mu_k \psi_k(x) \psi_k(y)
	\end{align}
	 and by construction $\hilt \subset \hil$, although $\hilt$ and $\hil$ are equipped with different inner products. 

\subsection{TKRR is itself a valid KRR}
\label{tKRRestimator}
We define the truncated kernel ridge regression (TKRR) estimator as the usual KRR estimator over $\hilt$, that is, any solution of the following problem
\begin{align}\label{eq:TKRR}
	\ft_{r,\lambda} \in \argmin_{f \in \hilt} 
	\;\frac1{n} \sum_{i=1}^n (y_i - f(x_i))^2 + \lambda \hiltnorm{f}^2.
\end{align}
The solution to this problem is in general not unique. However, all the solutions will produce the same values at points $x_1,\dots,x_n$ as the following result shows:
\begin{proposition}\label{prop:TKRR:equivalence}
 Let $\Kt = \sum_{k=1}^r \mu_k  u_k u_k^T = (\frac1n \Kert(x_i,x_j)) $ be the truncated kernel matrix and 
	\begin{align}
		\St_{\xv}^*(\omega) := \frac1{\sqrt n} \sum_{j=1}^n \omega_j \Kert(\cdot, x_j),
	\end{align}
which is the adjoint of $S_{\xv}$ as a map from $\hilt$ to $\reals^n$. 	Let $\widetilde\Omega$ be the solution set of~\eqref{eq:krr:omg:1} with $K$ replaced with $\Kt$. 
	Then, the following hold:
	\begin{enumerate}[(a), wide]\itemsep=0pt
		\item 	All the solutions $\ft_{r,\lambda}$ of~\eqref{eq:TKRR} are mapped to the same point in $\reals^n$ by the sampling operator $S_{\xv}$.
		\item %
		$\{ \St_{\xv}^*(\omega) \mid \omega \in \widetilde\Omega\}$ is the solution set of~\eqref{eq:TKRR}.
		\item For $r = n$, there is a unique solution $\ft_{r,\lambda} = \fh_{n, \lambda}$.
	\end{enumerate}
\end{proposition}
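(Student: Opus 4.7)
The plan is to reduce \eqref{eq:TKRR} to the finite-dimensional problem \eqref{eq:krr:omg:1} with $K$ replaced by $\Kt$, via the parametrization $f = \St_{\xv}^*(\omega)$. Expanding the kernel as $\Kert(x,y) = \sum_{k=1}^r \mu_k\psi_k(x)\psi_k(y)$ and using $\psi_k(x_j) = \sqrt n\,u_{kj}$, a direct computation gives $\St_{\xv}^*(\omega) = \sum_{k=1}^r \mu_k(u_k^T\omega)\,\psi_k$. Under Assumption~\ref{assum:inv}, $\mu_k > 0$ for $k \le r$ and $u_1,\dots,u_r$ are linearly independent, so this map is surjective onto $\hilt$. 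Moreover, from $S_{\xv}(\psi_k)=u_k$ one obtains $S_{\xv}\St_{\xv}^*(\omega) = \Kt\omega$, and bilinearity of $\hiltip{\cdot,\cdot}$ together with $\hiltip{\psi_k,\psi_\ell}=\mu_k^{-1}\ind{k=\ell}$ gives $\hiltnorm{\St_{\xv}^*(\omega)}^2 = \sum_{k=1}^r \mu_k(u_k^T\omega)^2 = \omega^T\Kt\omega$. Substituting these into \eqref{eq:TKRR} reproduces exactly the objective of \eqref{eq:krr:omg:1} with $K\to\Kt$.

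Part (b) then follows at once: surjectivity of $\St_{\xv}^*$ together with the objective equivalence means that $f\in\hilt$ minimizes \eqref{eq:TKRR} iff $f = \St_{\xv}^*(\omega)$ for some $\omega\in\widetilde\Omega$. For part (a), I would decompose $\omega = \sum_{k=1}^n c_k u_k$; then the reduced objective equals $\frac{1}{n}\|y - \sqrt n\sum_{k\le r}\mu_k c_k u_k\|^2 + \lambda\sum_{k\le r}\mu_k c_k^2$, which depends only on $(c_1,\dots,c_r)$ and is strictly convex in those coordinates. Hence the active components $(c_1^*,\dots,c_r^*)$ are common to every $\omega\in\widetilde\Omega$, so $\Kt\omega = \sum_{k\le r}\mu_k c_k^* u_k$ and therefore $S_{\xv}(\ft_{r,\lambda}) = \Kt\omega$ are the same across all solutions.

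For part (c), set $r=n$. Since each $\psi_k$ is a minimum-$\hil$-norm interpolant, it lies in $V := \Span\{\Ker(\cdot,x_j)\}_{j=1}^n$, and dimension counting forces $\hilt = V$ as subspaces of $\hil$ (both are $n$-dimensional under Assumption~\ref{assum:inv}). Next, $\Kert(x_i,x_j) = n\sum_k\mu_k u_{ki}u_{kj} = \Ker(x_i,x_j)$ on the design, and since both $\Kert(\cdot,x_j)$ and $\Ker(\cdot,x_j)$ lie in $V$ on which $S_{\xv}$ acts injectively (its matrix representation on $V$ is $\sqrt n K$, invertible by Assumption~\ref{assum:inv}), they agree as elements of $\hil$. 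Thus $\St_{\xv}^*=S_{\xv}^*$, the two finite-dim problems coincide with the unique solution $\omh$, and $\ft_{n,\lambda}=\fh_{n,\lambda}$. The main delicacy I anticipate is part (c), where one must keep the distinct inner-product structures on $\hilt$ and $\hil$ separate while still identifying $\Kert(\cdot,x_j)$ with $\Ker(\cdot,x_j)$ as elements of $\hil$; once the surjectivity of $\St_{\xv}^*$ and the identity $\hiltnorm{\St_{\xv}^*(\omega)}^2 = \omega^T\Kt\omega$ from the first paragraph are in place, everything else is a routine rewrite.
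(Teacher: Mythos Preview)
Your argument is correct. The overall reduction to the finite-dimensional problem~\eqref{eq:krr:omg:1} with $\Kt$ is the same as the paper's, but your execution differs in two places worth noting.

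For parts~(a) and~(b), the paper invokes the representer theorem to obtain $\ft_{r,\lambda}=\St_{\xv}^*(\omt)$, then writes the first-order optimality condition $\Kt[(\Kt+\lambda I)\omt-\yt]=0$ and solves it explicitly in the eigenbasis $U=[U_1\mid U_2]$, obtaining $\omt = U_1(\Lambda_1+\lambda I_r)^{-1}U_1^T\yt + U_2\beta$ with $\beta$ free, and hence $S_{\xv}(\ft_{r,\lambda})=\Kt\omt = U_1\Lambda_1(\Lambda_1+\lambda I_r)^{-1}U_1^T\yt$. You instead bypass the representer theorem by verifying surjectivity of $\St_{\xv}^*$ directly and then argue uniqueness of the active coordinates $(c_1,\dots,c_r)$ via strict convexity, without ever solving for them. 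Your route is slightly more elementary; the paper's route has the advantage that the closed form $S_{\xv}(\ft_{r,\lambda})=U\Gaml\,U^T\yt$ is exactly what is needed immediately afterwards in the proof of Theorem~\ref{thm:exact:mse}, so the explicit computation is not wasted.

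For part~(c), your treatment is in fact more complete than the paper's. The paper simply notes $\Kt=K$ when $r=n$ and that the $U_2\beta$ ambiguity disappears, leaving implicit the identification of $\Kert(\cdot,x_j)$ with $\Ker(\cdot,x_j)$ as functions. Your argument---that both lie in the $n$-dimensional span $V$ of $\{\Ker(\cdot,x_j)\}$ and agree under the injective map $S_{\xv}|_V$---fills this in cleanly and shows $\St_{\xv}^*=S_{\xv}^*$, after which the conclusion is immediate.
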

A consequence of the Proposition~\ref{prop:TKRR:equivalence} is that $\empnorm{\ft_{r,\lambda} - \fs}$ is the same no matter what TKRR solution we use. Although the above proposition is not the main contribution of this work, it provides the interesting observation that one can view TKRR as an exact KRR estimator over a smaller RKHS.

\section{Target alignment}\label{sec:target:align}

We start with the definition of target alignment:
\begin{definition}\label{defn:TA} The (target) alignment spectrum of a target function $\fs$ is 
    $\xi^* = U^T S_{\xv} (\fs) \in \reals^n$ where $U$ is the orthogonal matrix of the eigenvectors of $K$, with columns corresponding to eigenvalues of $K$ ordered in a decreasing fashion. The elements of $\xi^*$ are referred to as target alignment (TA) scores. 
\end{definition}

The following theorem gives an exact expression for the expected empirical MSE of TKRR estimate, in terms of the TA scores $\xis$ and the eigenvalues $(\mu_i)$ of the kernel matrix: 
\begin{theorem}[Exact MSE]\label{thm:exact:mse}
    Let $\Gaml$ be an $n\times n$ diagonal matrix with diagonal elements:
    \begin{align}\label{eq:Gaml:def}
    	(\Gaml)_{ii} = \dfrac{\mu_i}{\mu_i + \lambda} 1\{1 \le i \le r\}.
    \end{align}
	For any TKRR solution $\ft_{r,\lambda}$, we have
	\begin{align}
		 \ex \empnorm{\ft_{r,\lambda} - \fs}^2 &= 
		\norm{(I_n - \Gaml) \xi^*}_2^2 +   \frac{\sigma^2}n \tr(\Gaml^2) \notag \\
		&=
    \sum_{i=1}^r \frac{\lambda^2}{(\mu_i + \lambda)^2} (\xis_i)^2 +
    \sum_{i=r+1}^n (\xis_i)^2
    + 
    \frac{\sigma^2}{n} \sum_{i=1}^r \frac{\mu_i^2}{(\mu_i + \lambda)^2} \label{eq:mse:expr} \\
    &= \empnorm{\fs}^2 + \sum_{i=1}^r \frac1{(\mu_i + \lambda)^2} \Bigl[ - a_i(\lambda)  (\xis_i)^2
    + \frac{\sigma^2}{n} \mu_i^2 \Bigr]\label{eq:mse:expr:2}
	\end{align}
where $a_i(\lambda) = (\mu_i+\lambda)^2 - \lambda^2$ and the expectation is w.r.t. the randomness in the noise vector $w$. 
\end{theorem}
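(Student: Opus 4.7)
The plan is to leverage Proposition~\ref{prop:TKRR:equivalence} to reduce~\eqref{eq:TKRR} to a finite-dimensional quadratic program, diagonalize everything in the eigenbasis of $K$ so that the optimization separates coordinate-wise, and then apply a standard bias--variance decomposition. By part~(a) of that proposition, $\empnorm{\ft_{r,\lambda} - \fs}$ is well-defined independently of which TKRR solution is chosen; by part~(b), it suffices to analyze $S_{\xv}(\ft_{r,\lambda}) = \Kt\,\omh$ for any $\omh \in \widetilde\Omega$, using the identity $S_{\xv} \St_{\xv}^* = \Kt$ noted below~\eqref{eq:Kt:def}.

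Writing $K = U D U^T$ with $D = \mathrm{diag}(\mu_1,\dots,\mu_n)$, we have $\Kt = U \Dt U^T$ with $\Dt = \mathrm{diag}(\mu_1,\dots,\mu_r,0,\dots,0)$. Under the change of variables $\eta = U^T \omega$ and $\yt := U^T y / \sqrt n$, the objective in~\eqref{eq:krr:omg:1} (with $K$ replaced by $\Kt$) decouples into scalar problems: for $i \le r$, $\min_{\eta_i} (\yt_i - \mu_i \eta_i)^2 + \lambda \mu_i \eta_i^2$ with minimizer $\hat\eta_i = \yt_i/(\mu_i+\lambda)$, while for $i > r$ the coefficient $\eta_i$ is irrelevant because it is annihilated by $\Dt$. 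Hence
\begin{align*}
    U^T S_{\xv}(\ft_{r,\lambda}) \;=\; U^T U \Dt \hat\eta \;=\; \Gaml \yt,
\end{align*}
so $S_{\xv}(\ft_{r,\lambda})$ is a spectral shrinkage of the rotated response.

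Using the model $y = \sqrt n \, S_{\xv}(\fs) + w$ together with Definition~\ref{defn:TA}, we obtain $\yt = \xis + U^T w/\sqrt n$; subtracting $U^T S_{\xv}(\fs) = \xis$ yields
\begin{align*}
    U^T\bigl(S_{\xv}(\ft_{r,\lambda}) - S_{\xv}(\fs)\bigr) \;=\; -(I_n - \Gaml)\xis + \Gaml U^T w/\sqrt n.
\end{align*}
Taking the squared Euclidean norm (which is preserved by $U$) and then expectation over $w$, the cross term vanishes by mean-zero noise, and the variance term equals $(\sigma^2/n)\tr(\Gaml^2)$ because $\ex[w w^T] = \sigma^2 I_n$ and $U$ is orthogonal. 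This yields the first displayed equality of the theorem. The second line~\eqref{eq:mse:expr} follows by reading off the diagonal entries of $I_n-\Gaml$ and $\Gaml^2$ and separating the sums into $i \le r$ and $i > r$; and the third line~\eqref{eq:mse:expr:2} follows from $\empnorm{\fs}^2 = \|\xis\|_2^2$ by combining the $i \le r$ terms over the common denominator $(\mu_i+\lambda)^2$ and using $\lambda^2 = (\mu_i+\lambda)^2 - a_i(\lambda)$.

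I do not anticipate a substantive obstacle; the only delicate point is keeping straight the $\sqrt n$ factors in the sampling operator versus the normalization of $K$, and ensuring that the non-uniqueness of TKRR solutions (arising from the freedom in $\eta_i$ for $i > r$) does not affect the conclusion, which is precisely why Proposition~\ref{prop:TKRR:equivalence}(a) must be invoked at the outset.
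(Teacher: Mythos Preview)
Your proposal is correct and follows essentially the same approach as the paper's proof: diagonalize in the eigenbasis of $K$ to obtain the spectral shrinkage identity $U^T S_{\xv}(\ft_{r,\lambda}) = \Gaml\,\yt$, then perform the bias--variance decomposition using $\ex[w]=0$ and $\cov(w)=\sigma^2 I_n$. The only cosmetic difference is that you re-derive the shrinkage formula from the decoupled scalar problems, whereas the paper imports it directly from~\eqref{eq:Sx:ft} in the proof of Proposition~\ref{prop:TKRR:equivalence}.
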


Assume that the target function is normalized so that $\empnorm{\fs} = 1$. Since $U$ is an orthogonal matrix, this is equivalent to $\norm{\xis}_2 = 1$. Since the coefficient of $(\xis_i)^2$ in \eqref{eq:mse:expr:2} is negative, the more the alignment spectrum peaks near the lower indices, the smaller the expected MSE. This observation is made more precise in Proposition~\ref{prop:band:lim} below.

\subsection{Bandlimited alignment}
To see the implications of Theorem~\ref{thm:exact:mse}, let us first consider a \emph{bandlimited model} as follows: $\fs$ is randomly generated with alignment scores $\xis_i$ satisfying 
\begin{align}\label{eq:BL:model:xi:cond}
	\ex(\xis_i)^2 = \frac1{b} 1\{ \ell+1 \le i \le \ell + b\}.
\end{align}
for $\ell \ge 0$ and $b \ge 1$ (both integers). Then, we write $\fs \sim \mathcal B_{b,\ell}$ and note that $\ex \empnorm{f^*}^2 = 1$ for such $\fs$.  We write $\mseb := \ex \empnorm{\ft_{r,\lambda} - \fs}^2$ where the expectation is over the randomness in both $\fs$ and $w$---the noise vector in~\eqref{eq:mod:1}. One can think of this setting as a Bayesian model for $\fs$ and of $\mseb$ as the Bayes risk w.r.t. to aformentioned prior on $f^*$. We leave the specification of the prior open subject to the condition~\eqref{eq:BL:model:xi:cond} since this is the only assumption we need.
\begin{proposition}
 \label{prop:band:lim}
	With $\fs \sim \mathcal B_{b,\ell}$, we have
	\begin{align*}
		\mseb = 1 - \frac1b \sum_{i=\ell+1}^{(\ell+b) \wedge r} \frac{a_i(\lambda)}{(\mu_i + \lambda)^2}  
		   +  \frac{\sigma^2}{n} \sum_{i=1}^r \frac{\mu_i^2}{(\mu_i + \lambda)^2},
	\end{align*}
	Assume for simplicity that $\{\mu_i\}$ are distinct. Then, the following hold: 
	\begin{enumerate}[(a)]
		\item For a fixed $\ell$, $b$ and $\lambda$, the $\mseb$ as a function of $r$ is increasing in $[1,\ell]$, and increasing in $[\ell+b+1,n]$. Let $j^* = \min\{i \in [\ell+1,\ell+b] \mid 1 + \frac{2\lambda}{\mu_i} > \frac{\sigma^2}n b \}$. Then,  $\mseb$ as a function of $r$ is nondecreasing in $[\ell+1, j^*]$ and decreasing in $[j^*, \ell+b]$. \label{band:lim:r:curve}
		
		\item For fixed $r$, $b$ and $\lambda$, the $\mseb$ as a function of $\ell$ is increasing in $\ell \in [0, r-b]$. 
		
		\item For fixed $r$, $\ell$ and $\lambda$, the $\mseb$ as a function of $b$ is increasing in $b \in [1, r-\ell]$. 
		
	\end{enumerate}
\end{proposition}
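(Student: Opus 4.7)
The plan is to first derive the displayed closed form for $\mseb$ by averaging Theorem~\ref{thm:exact:mse} over the prior on $\fs$, and then to establish each monotonicity claim by computing the discrete forward increment in the relevant variable. For the closed form, I would start from identity~\eqref{eq:mse:expr:2} and take expectation over $\fs \sim \mathcal B_{b,\ell}$. By the prior assumption, $\ex(\xis_i)^2 = \tfrac{1}{b}\ind{\ell+1 \le i \le \ell+b}$, which summed over $i$ gives $\ex\empnorm{\fs}^2 = 1$, while intersecting the indicator with $\{1,\dots,r\}$ shrinks the first inner sum to $i \in \{\ell+1,\dots,(\ell+b)\wedge r\}$; the variance term is untouched by averaging. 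This yields the stated expression.

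For part (a), I would compute the forward increment
\[
\Delta(r) := \mseb(r) - \mseb(r-1) = -\frac{1}{b}\,\frac{a_r(\lambda)}{(\mu_r+\lambda)^2}\,\ind{\ell+1 \le r \le \ell+b} + \frac{\sigma^2}{n}\,\frac{\mu_r^2}{(\mu_r+\lambda)^2}.
\]
For $r \in [1,\ell]$ or $r \in [\ell+b+1,n]$ the indicator vanishes, so $\Delta(r) > 0$ and $\mseb$ is strictly increasing on those ranges. For $r \in [\ell+1,\ell+b]$, the identity $a_r(\lambda) = \mu_r^2 + 2\mu_r\lambda$ gives $\Delta(r) \ge 0 \iff 1 + \tfrac{2\lambda}{\mu_r} \le \tfrac{\sigma^2 b}{n}$. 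Since $\mu_r$ is strictly decreasing in $r$, the left-hand side is strictly increasing in $r$, so the inequality transitions from satisfied to violated at precisely the index $j^*$ defined in the statement, producing the claimed nondecreasing/decreasing split on $[\ell+1,\ell+b]$.

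For (b) and (c), restrict attention to the regime $\ell+b \le r$, where the upper summation limit equals $\ell+b$ and the variance term plays no role. The single monotone quantity powering both parts is
\[
t_i := \frac{a_i(\lambda)}{(\mu_i+\lambda)^2} = 1 - \frac{\lambda^2}{(\mu_i+\lambda)^2},
\]
which is strictly decreasing in $i$ because $\mu_i$ is strictly decreasing. For (b), the shift-$\ell$ increment is $\mseb(\ell+1) - \mseb(\ell) = \tfrac{1}{b}\bigl(t_{\ell+1} - t_{\ell+b+1}\bigr) > 0$. For (c), write $\mseb = 1 - \bar t_b + V$, where $\bar t_b := \tfrac{1}{b}\sum_{i=\ell+1}^{\ell+b} t_i$ and $V$ is independent of $b$; the running average of a strictly decreasing sequence strictly decreases as the window grows (the newly added term $t_{\ell+b+1}$ is strictly smaller than the current average), so $\bar t_b$ decreases in $b$ and $\mseb$ increases.

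There is no substantive analytic obstacle, the content reducing to the single identity $a_i(\lambda)/(\mu_i+\lambda)^2 = 1 - \lambda^2/(\mu_i+\lambda)^2$ combined with the monotonicity of $\mu_i$. The only care required is in the discrete bookkeeping of which summation terms are affected when $r$, $\ell$, or $b$ shifts by one, and in the sign analysis at the threshold $j^*$ in part (a), where one must verify that the strict monotonicity of $1 + 2\lambda/\mu_r$ implies the transition occurs at exactly one index within $[\ell+1,\ell+b]$.
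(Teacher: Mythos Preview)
Your proposal is correct and follows essentially the same route as the paper: derive the $\mseb$ formula from~\eqref{eq:mse:expr:2} using $\ex(\xis_i)^2=\tfrac1b\ind{\ell+1\le i\le \ell+b}$, then analyze forward increments in $r$ for part~(a) and exploit the monotonicity of $t_i=a_i(\lambda)/(\mu_i+\lambda)^2=1-\lambda^2/(\mu_i+\lambda)^2$ for parts~(b) and~(c). The paper phrases~(b) via the sliding-window sum of the increasing sequence $\lambda^2/(\mu_i+\lambda)^2$ and~(c) via its running average, which is exactly your $\bar t_b$ argument in complementary form.
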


Part (a) of Proposition~\ref{prop:band:lim} shows that if $ (\sigma^2/n) b < 1 + 2\lambda/ \mu_{\ell+1}$, then $r \mapsto \mseb$ is decreasing over the entire interval $[\ell+1,\ell+1+b]$. This result shows that $\mseb$ as a function of $r$ can be non-monotonic (go up, down and back up).  Similar observations as Proposition~\ref{prop:band:lim}(a) can be made about bandlimited signals consisting of multiple nonzero bands, e.g., $\fs = \frac1{\sqrt K}(\fs_1 + \cdots + \fs_K)$ where $\fs_i \in \sim \mathcal B_{b_i, \ell_i}$ with non-overlapping bands. Part~(b)  shows that alignment spectra that are concentrated near lower indices are better. Part~(c) shows that concentrated alignment spectra are better than diffuse ones.

\subsection{Polynomial alignment}\label{subsec:polyAli}
We now consider the case where the TA scores decay polynomially with a rate potentially faster or slower than what is required by merely belonging to the RKHS. In particular, assume that 
\begin{align}\label{eq:poly:decay:assump}
	\mu_i \asymp i^{-\alpha}, \quad (\xi_i^*)^2 \asymp i^{-2\gamma \alpha -1}
\end{align}
for some $\gamma > 0$ and $\alpha \ge 1$. 
Let us justify the choice of decay for $(\xi_i^*)^2$. Asymptotically, both $(\xi_i^*)^2$ and $\mu_i$ should decay similar to the population level TA scores and eigenvalues. %
Then, $f^* \in \hil$, if and only if $\sum_i (\xi_i^*)^2 / \mu_i \lesssim 1$ (see Appendix~\ref{app:rkhs:background}).
Assuming a polynomial decay for $(\xi_i^*)^2$, it follows that that $(\xi_i^*)^2$ should decay faster than $i^{-\alpha-1}$ for the sum to converge, that is $\lesssim i^{-\alpha-1-\delta}$ for some $\delta > 0$. Without loss of generality, we are assuming $\delta = (2\gamma-1) \alpha$. This parameterization is chosen to be consistent with the existing literature.  Thus, for a function $\fs \in \hil$, $\xis$ satisfies~\eqref{eq:poly:decay:assump} with $\gamma > 1/2$. The case $\gamma \le 1/2$ then corresponds to a target that does not belong to the RKHS.

The following result characterizes the performance of full KRR and TKRR under model~\eqref{eq:poly:decay:assump}:

\begin{theorem}\label{thm:poly:rate}
	Let $\eta = \min(r, \lambda^{-1/\alpha})$. 
	Under the polynomial decay assumption~\eqref{eq:poly:decay:assump},
	\begin{align}\label{eq:poly:rate:mse}
		 \ex \empnorm{\ft_{r,\lambda} - \fs}^2 
		\;\asymp \; \lambda^2 \max(1 , \eta^{-2(\gamma-1)\alpha}) + r^{-2\gamma \alpha} 1\{r < n\} + \frac{\sigma^2}{n} \eta.
	\end{align}
	\begin{enumerate}[(a),wide]
		\item \label{part:tkrr:rate}	Taking $\lambda \asymp (\sigma^2/n)^{\gamma \alpha/(2\gamma \alpha+1)}$ and $r \asymp (n/\sigma^2)^{1/(2\gamma\alpha+1)}$, TKRR achieves the following rate
		\begin{align}\label{eq:TKRR:optimal:rate}
			\ex \empnorm{\ft_{r,\lambda} - \fs}^2 
			\;\asymp\;  \Bigl(\frac{\sigma^2}{n}\Bigr)^{2\gamma \alpha/(2\gamma \alpha+1)} \quad \text{for $\gamma > 1$}.
		\end{align}		
		\item \label{part:full:rate}
		Assume $n^{-2\alpha} \lesssim \sigma^2 \lesssim n$, and let $\delta := \min(1,\gamma)$. Then, the best rate achievable by the full KRR is obtained for regularization choice $\lambda \asymp (\sigma^2/n)^{\alpha/(2\delta \alpha + 1)}$ and is given by
		\begin{align}\label{eq:full:KRR:optimal:rate}
		    \ex \empnorm{\ft_{r,\lambda} - \fs}^2 
			\;\asymp\;  \Bigl(\frac{\sigma^2}{n}\Bigr)^{2\delta \alpha/(2\delta \alpha+1)} \quad \text{for $\gamma > 0$}.
		\end{align}
	\end{enumerate}
\end{theorem}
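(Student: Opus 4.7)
The starting point is the exact decomposition in Theorem~\ref{thm:exact:mse}, which writes the MSE as the sum of three non-negative pieces: a regularization bias $B_1 := \sum_{i=1}^r \lambda^2 (\mu_i+\lambda)^{-2}(\xis_i)^2$, a truncation bias $B_2 := \sum_{i=r+1}^n (\xis_i)^2$, and a variance $V := (\sigma^2/n)\sum_{i=1}^r \mu_i^2(\mu_i+\lambda)^{-2}$. The plan is to first establish~\eqref{eq:poly:rate:mse} by estimating each piece under the polynomial assumptions~\eqref{eq:poly:decay:assump}, then deduce parts (\ref{part:tkrr:rate}) and (\ref{part:full:rate}) by substituting the prescribed choices of $(\lambda,r)$ and balancing.

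The standard device is to introduce the crossover index $i_\lambda \asymp \lambda^{-1/\alpha}$ at which $\mu_{i_\lambda} \asymp \lambda$: for $i \le i_\lambda$ one has $\mu_i+\lambda \asymp i^{-\alpha}$, while for $i > i_\lambda$ one has $\mu_i+\lambda \asymp \lambda$. Splitting each sum at $i_\lambda$ reduces the summand to a monotone power of $i$, comparable to an integral up to $\alpha,\gamma$-dependent constants. This immediately yields $B_2 \asymp r^{-2\gamma\alpha}\,1\{r<n\}$ and $V \asymp (\sigma^2/n)\min(r,\lambda^{-1/\alpha}) = (\sigma^2/n)\eta$ (the pre-crossover piece of $V$ contributes $\eta$, the post-crossover piece at most the same order). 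For $B_1$, the post-crossover piece behaves like $\sum_{i>i_\lambda} i^{-2\gamma\alpha-1} \asymp \lambda^{2\gamma}$, while the pre-crossover piece equals $\lambda^2 \sum_{i\le \min(r,i_\lambda)} i^{2(1-\gamma)\alpha-1}$, which is $\asymp \lambda^2$ for $\gamma>1$ (convergent tail) and $\asymp \lambda^2 \eta^{2(1-\gamma)\alpha}$ for $\gamma < 1$ (divergent, dominated by its upper limit). Since the post-crossover piece is always dominated, one obtains the compact form $B_1 \asymp \lambda^2 \max(1,\eta^{-2(\gamma-1)\alpha})$ and hence~\eqref{eq:poly:rate:mse}.

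For part~(\ref{part:tkrr:rate}), I would substitute $\lambda \asymp (\sigma^2/n)^{\gamma\alpha/(2\gamma\alpha+1)}$ and $r \asymp (n/\sigma^2)^{1/(2\gamma\alpha+1)}$ and check that $\lambda^{-1/\alpha} \asymp (n/\sigma^2)^{\gamma/(2\gamma\alpha+1)} \ge r$ exactly when $\gamma \ge 1$, so under $\gamma>1$ we have $\eta = r$ and $\eta^{-2(\gamma-1)\alpha} \le 1$, collapsing the bias factor to $\lambda^2$. A direct computation then gives $\lambda^2 \asymp r^{-2\gamma\alpha} \asymp (\sigma^2/n)\eta \asymp (\sigma^2/n)^{2\gamma\alpha/(2\gamma\alpha+1)}$, proving~\eqref{eq:TKRR:optimal:rate}. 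For part~(\ref{part:full:rate}), setting $r=n$ kills $B_2$ and makes $\eta = \min(n,\lambda^{-1/\alpha})$; the lower bound $\sigma^2 \gtrsim n^{-2\alpha}$ forces any candidate-optimal $\lambda$ to lie in $[n^{-\alpha},1]$, so $\eta = \lambda^{-1/\alpha}$ and the MSE reduces to $\lambda^2\max(1,\lambda^{-2(\gamma-1)}) + (\sigma^2/n)\lambda^{-1/\alpha}$. Balancing bias and variance splits into two cases: $\gamma>1$ gives $\lambda \asymp (\sigma^2/n)^{\alpha/(2\alpha+1)}$ with rate $(\sigma^2/n)^{2\alpha/(2\alpha+1)}$, and $\gamma \le 1$ gives $\lambda \asymp (\sigma^2/n)^{\alpha/(2\gamma\alpha+1)}$ with rate $(\sigma^2/n)^{2\gamma\alpha/(2\gamma\alpha+1)}$; both are captured uniformly by $\delta := \min(1,\gamma)$ to give~\eqref{eq:full:KRR:optimal:rate}. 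Optimality of this $\lambda$ then follows from the monotonicity of the bias (increasing in $\lambda$) and variance (decreasing in $\lambda$).

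The main obstacle is not any single estimate but the consistent bookkeeping across regimes: at every substitution one must verify which of $r$ and $\lambda^{-1/\alpha}$ governs $\eta$, whether $\gamma$ exceeds $1$ (changing the shape of $B_1$), and whether the chosen $\lambda$ lies in $[n^{-\alpha},1]$ so that the crossover-based approximations apply. The matching lower bounds behind the $\asymp$ in~\eqref{eq:poly:rate:mse} require the same split-at-$i_\lambda$ argument applied to an interval of length $\asymp i_\lambda$ straddling the crossover, which is routine but easy to miscalibrate, and some care is needed at the boundary $\gamma=1$ where a logarithmic slack in $B_1$ is absorbed into the $\asymp$.
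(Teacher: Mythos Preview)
Your proposal is correct and follows essentially the same route as the paper: start from the exact three-term decomposition of Theorem~\ref{thm:exact:mse}, split each sum at the crossover index $i_\lambda\asymp\lambda^{-1/\alpha}$ (the paper phrases this via the equivalent device $\mu_i\lambda/(\mu_i+\lambda)\asymp\min(\mu_i,\lambda)$), compare to integrals to obtain~\eqref{eq:poly:rate:mse}, and then substitute and balance for parts~(\ref{part:tkrr:rate}) and~(\ref{part:full:rate}). The only cosmetic difference is that the paper, for part~(\ref{part:full:rate}), explicitly checks the two off-range regimes $\lambda>1$ and $\lambda<n^{-\alpha}$ and shows each is suboptimal, whereas you argue a priori that the optimum must lie in $[n^{-\alpha},1]$; both are fine, and your flagging of the $\gamma=1$ logarithmic slack is appropriate (the paper glosses over it).
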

Note that the exponent $\delta$ saturates to $\delta = 1$ for all $\gamma > 1$ giving the best rate $(\sigma^2/n)^{2\alpha/(2\alpha+1)}$ in those cases. 
In contrast, the rate in~\eqref{eq:TKRR:optimal:rate} achievable by TKRR improves for $\gamma > 1$ without bound, and in the limit of $\gamma \to \infty$, approaches $(\sigma^2/n)^{-1}$, the best parametric rate.

It is also well-known that the minimax rate over the unit ball of the RKHS with eigendecay $\mu_i \asymp i^{-\alpha}$ is $(\sigma^2/n)^{\alpha/(\alpha+1)}$. This can also be seen by letting $\gamma \to 1/2$ in either~\eqref{eq:TKRR:optimal:rate} or~\eqref{eq:full:KRR:optimal:rate}, removing any target alignment assumption beyond what is provided by the RKHS itself.

To summarize, let us define the rate exponent function,
\begin{align}\label{eq:rate:exponent}
    s(\gamma) := 2\gamma \alpha/(2\gamma \alpha+1).
\end{align}
There are four regimes of target alignment, implied by Theorem~\ref{thm:poly:rate}:
\begin{enumerate}
	\item Under-aligned regime, $\gamma \in (0,\frac12)$: The target is not in the RKHS. The best achievable rate is %
	$(\sigma^2/ n)^{s(\gamma)}$
	which is slower than the minimax rate over the ball of the RKHS, $(\sigma^2/n)^{s(\frac12)}$. %
 	
	\item Just-aligned regime, $\gamma = \frac12$ (to be precise $\gamma \downarrow \frac12$): The target is only assumed to be in the RKHS. The best rate achievable is the  minimax rate over the ball of the RKHS, $(\sigma^2/n)^{s(\frac12)}$. %
 	
	 \item Weakly-aligned regime, $\gamma \in (\frac12,1]$: The target is in the RKHS and more aligned with the kernel than what is implied by being in the RKHS. The best achievable rate is $(\sigma^2/ n)^{s(\gamma)}$
	 which is faster than the minimax rate over the ball of the RKHS, $(\sigma^2/n)^{s(\frac12)}$. %
	 The rate is achievable by the full KRR and hence TKRR.
 	
	 \item Over-aligned regime, $\gamma > 1$: The target is in the RKHS and is strongly aligned with the kernel. The best achievable rate is 
	 $(\sigma^2/ n)^{s(\gamma)}$
	 which is achieved by TKRR. The full KRR can only achieve the rate 
	 $(\sigma^2/ n)^{s(1)}$
	 in this case, which is the best achievable in the weakly-aligned regime. %
\end{enumerate}

Note that the best achievable rate in the first three regimes is attainable by full KRR (hence TKRR) while for the fourth regime, only TKRR can achieve the given rate.
It was shown in~\cite{caponnetto2007optimal} (cf. Appendix~\ref{app:rate:details} for details) that the rate given in the weakly-aligned regime is minimax optimal over the class of targets~\eqref{eq:poly:decay:assump} with $\gamma \in (1/2,1]$. We conjecture that the minimax optimality of this rate extends to the over-aligned regime, that is, we conjecture that TKRR is minimax optimal for all $\gamma > 1/2$.

The capped rate~\eqref{eq:full:KRR:optimal:rate} for the full KRR is the same as the one obtained in~\cite[Eqn.~(12)]{cui2021generalization}. The rate~\eqref{eq:TKRR:optimal:rate} for the TKRR in the over-aligned regime is new to the best of our knowledge.
The work of~\cite{amini2021tkrr} shows some improvement in TKRR over full KRR, but since the minimax rate was considered only over the unit ball, no difference in rates was shown between TKRR and full KRR. In contrast, by considering the smaller alignment class~\eqref{eq:poly:decay:assump}, we can show the rate differences. We also note that the cross-over effects noted in~\cite{cui2021generalization} for different regularization regimes follow easily from the simplified formula~\eqref{eq:poly:rate:mse}, as can be seen by inspecting the various cases in the proof of part~(b) of Theorem~\ref{thm:poly:rate}. To simplify the statement of Theorem~\ref{thm:poly:rate}, we have focused only on the optimal regularization regime.

\section{Simulations}
\label{sec:simulation}
We present various simulation results to demonstrate the multiple-descent and phase transition behavior of the regularization curves, and corroborate the theoretical results in Theorem \ref{thm:poly:rate}.
First, consider the  bandlimited alignment~\eqref{eq:BL:model:xi:cond} where the nonzero entries of $\xis$ are i.i.d. draws from $N(0,1)$ and $\xis$ is normalized to have unit norm. We use the Gaussian kernel $e^{-\norm{x-x'}/2h^2}$ in $d=4$ dimensions with bandwidth $h=\sqrt{d/2}$, applied to 200 samples generated from a uniform distribution on $[0,1]^d$, and let $r=l+b$.

The plots in Figure \ref{fig:1a} show the  multiple-descent and phase transition behavior of the $\lambda$-regularization curves (that is, expected MSE versus $-\log(\lambda)$)
for different values of the noise level $\sigma$. The corresponding contour plot is shown in Figure \ref{fig:1b}. The plots show a transition from monotonically decreasing at $\sigma = 0$ to monotonically increasing for large $\sigma$, with non-monotonic behavior for moderate values of $\sigma$.

\begin{figure}[t]
\centering
\begin{subfigure}{0.49\textwidth}
 \centering
 \includegraphics[scale=0.29]{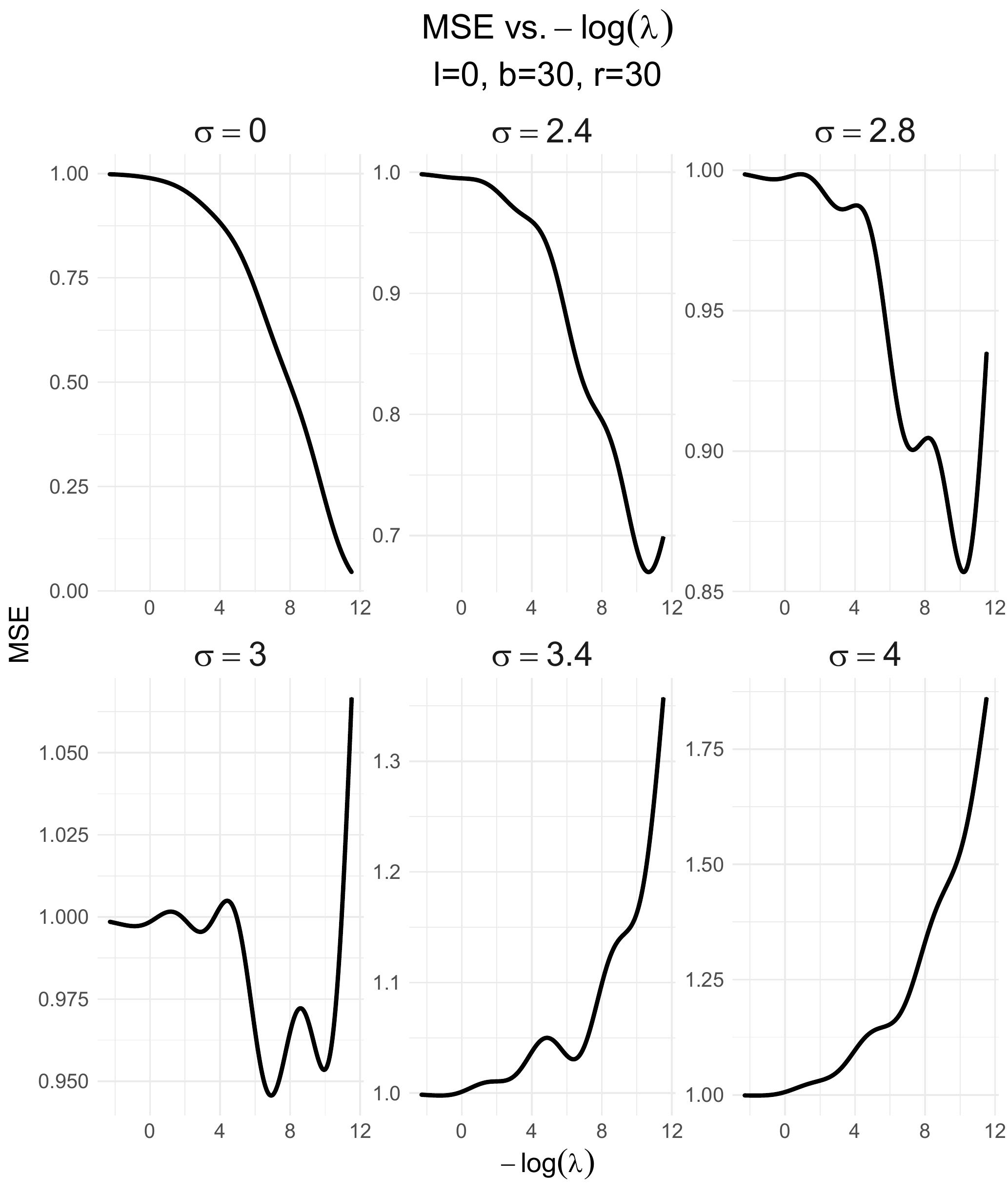}   
    \subcaption{Panel plot}
    \label{fig:1a}
 \end{subfigure}
\begin{subfigure}{0.49\textwidth}
 \centering
 \includegraphics[scale=0.57]{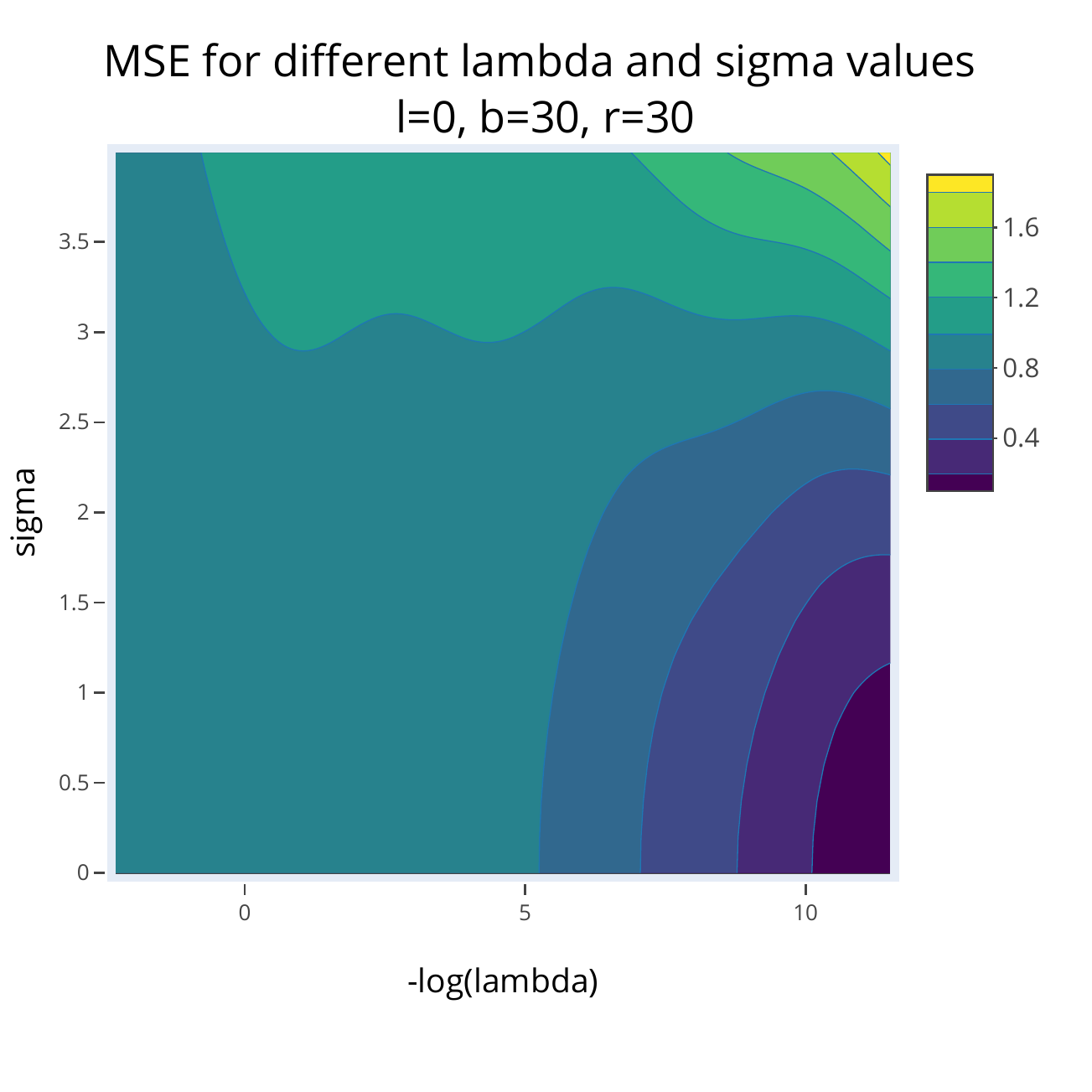}   
    \subcaption{Contour plot}
    \label{fig:1b}
 \end{subfigure}
 \caption{Multiple-descent and phase transition of 
 for $\lambda$-regularization curve:
 (a) Expected MSE as a function of $-\log(\lambda)$ %
 for different values of $\sigma$, 
 and (b) overall contour plot of expected MSE for $\sigma$ vs. $-\log(\lambda)$.}
\end{figure}

\begin{figure}[t]
\centering
\begin{subfigure}{0.49\textwidth}
 \centering
 \includegraphics[scale=0.29]{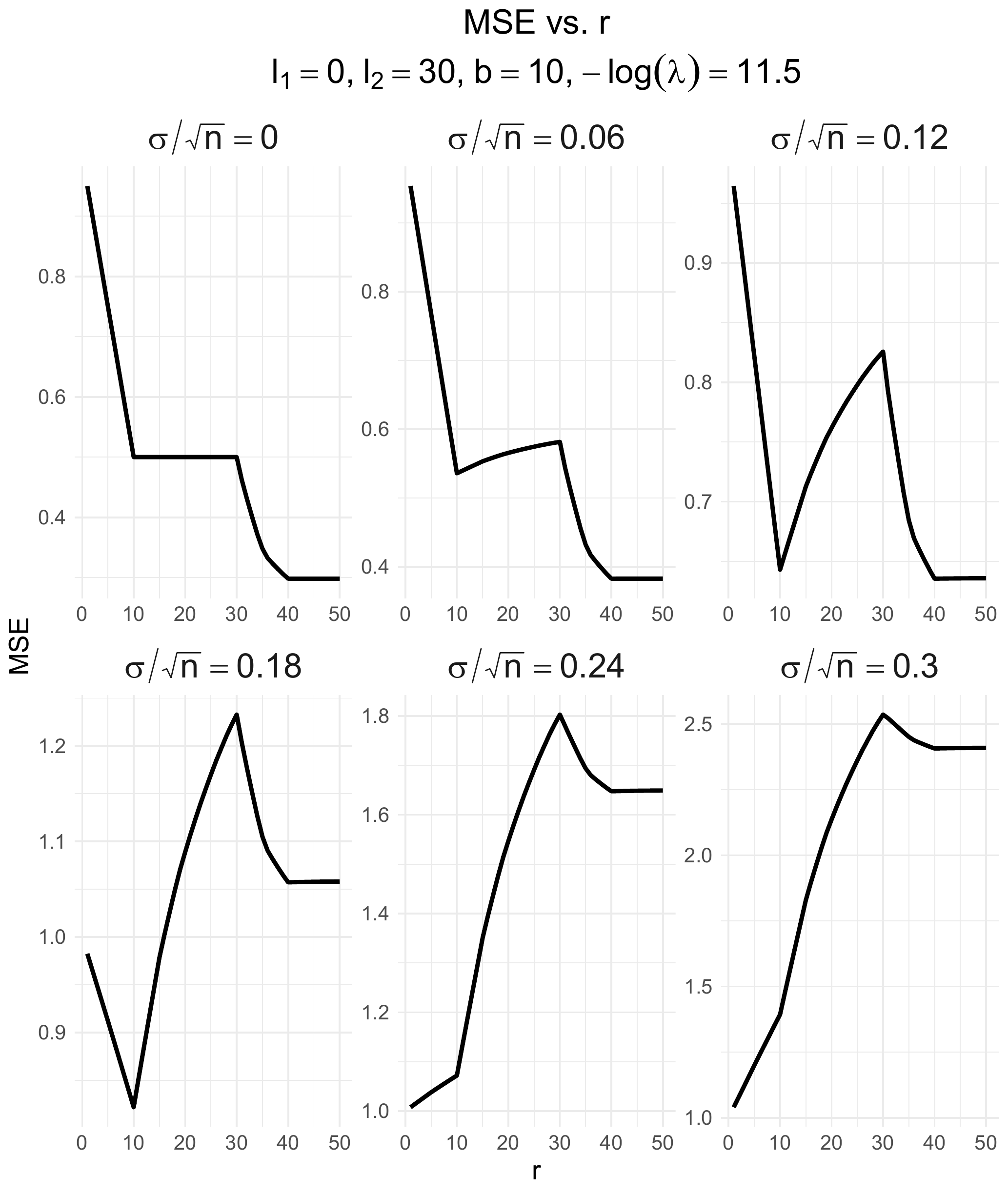}   
    \subcaption{Panel plot}
    \label{fig:2a}
 \end{subfigure}
\begin{subfigure}{0.49\textwidth}
 \centering
 \includegraphics[scale=0.55]{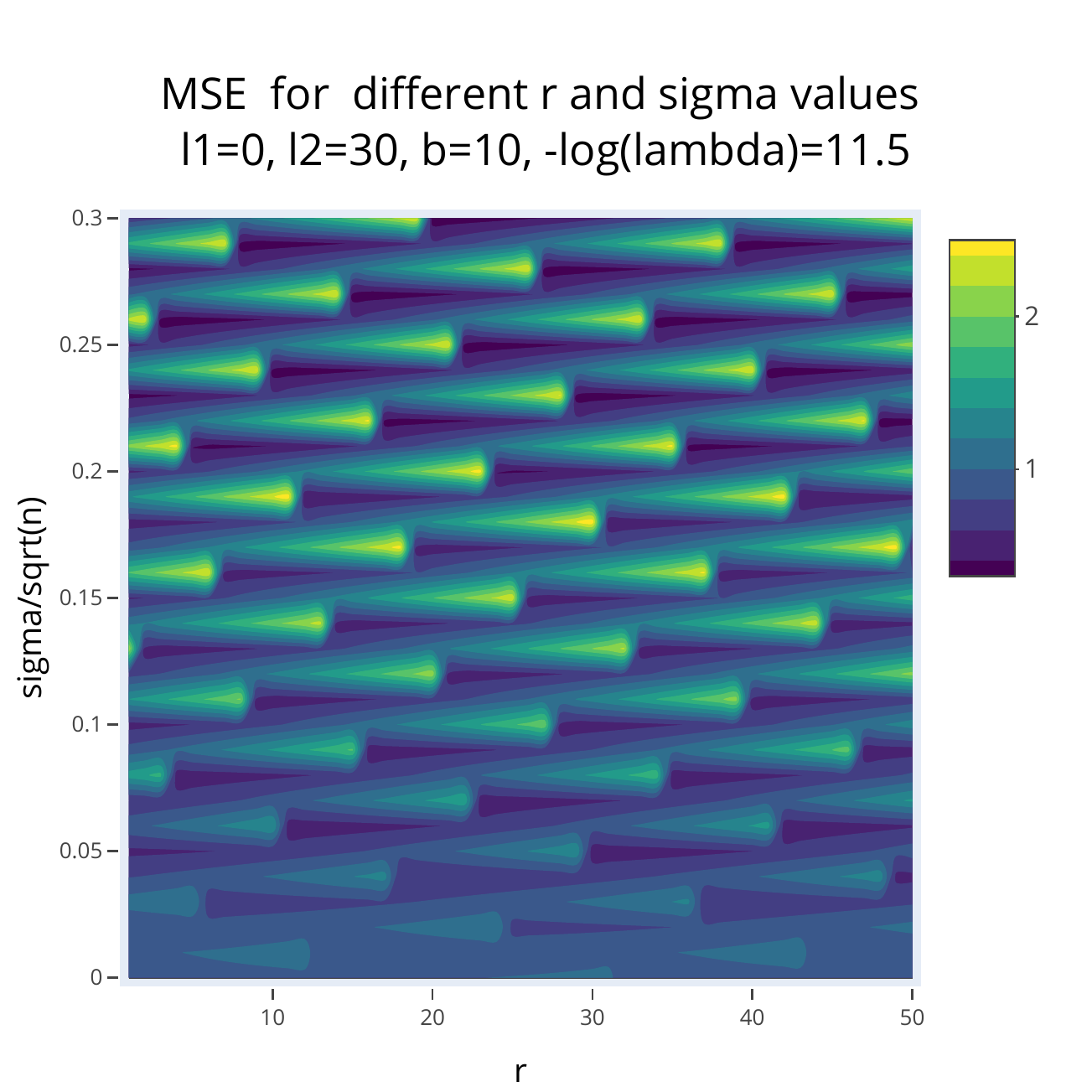}   
    \subcaption{Contour plot}
    \label{fig:2b}
 \end{subfigure}
  \caption{Double-descent and phase transition for $r$-regularization curves:
  (a) Expected MSE as a function of the truncation parameter $r$. Plots are indexed by different values of $\sigma/\sqrt{n}$, and (b) overall contour plot of expected MSE for $\sigma/\sqrt{n}$ vs. $r$. }
\end{figure}

Next, we consider a bandlimited alignment with two non-overlapping bands of length $b$, starting at indices $\ell_1+1$ and $\ell_2+1$. The nonzero entries of $\xis$ are generated as before, followed by unit norm normalization. We fix $\lambda$ and consider the $r$-regularization curve, that is, the plot of MSE versus truncation parameter $r$. The results are shown in Figure~\ref{fig:2a} for different values of $\sigma/\sqrt{n}$. Figure~\ref{fig:2b} shows the corresponding contour plot.
The plots show the double-descent and phase transition behavior of the $r$-regularization curves. The double-decent behavior is consistent with Theorem~\ref{prop:band:lim}\ref{band:lim:r:curve} that predicts the MSE goes up as $r$ varies between bands. Simulations corroborating Theorem~\ref{thm:poly:rate} are provided in Appendix~\ref{app:add:sims}. The code for these experiments is available at GitHub repository \href{https://github.com/aaamini/krr-target-align}{aaamini/krr-target-align}~\cite{Amini_Target_alignment_in_2022}.

\section{Conclusion}\label{sec:conclusion}
We presented an analysis of the TKRR in the light of recent advances that were made for the KRR. We showed the TKRR equivalence to a KRR on a (smaller) RKHS embedded in the original RKHS. We believe that this insight provides a different perspective on the TKRR and unifies it with the KRR development.  We derived an exact MSE decomposition for the TKRR via TA scores and eigenvalues of the kernel matrix. When TA scores exhibit polynomial decay we identified an \emph{over-aligned} regime, where the TKRR can outperform ordinary KRR. %
In case the TA scores are bandlimited, we demonstrated the non-monotonicity of the regularization curve with respect to the level of truncation and a correspondence of narrower TA spectra that are positioned towards lower indices with lower generalization error.

\textbf{Limitations and extensions.} Although we restricted our attention to the empirical norm $\norm{f}_n^2$ and a fixed design setting (where $\{x_i\}$ are deterministic), it is possible to extend these results to the population $L_2$-norm, $\norm{f}^2 = \int f(x) d\mu(x)$, under a random design model $x_i \sim \mu$. This can be done using results such as those in~\cite[Chapter 14]{wainwright2019high} showing that $\norm{f}^2$ is within a constant factor of $\norm{f}_n^2$ for large $n$, uniformly over $f$. Our results are also easily extended to the non-additive and heteroscedastic noise setting, where $(x,y)$ are drawn from a general distribution, $f^*(x) := \ex[y \mid x]$ and $w = y - f^*(x)$. The extension is possible if we assume $\var(w \,|\, x) \le \sigma^2$, in which case, Theorem~\ref{thm:exact:mse}, for example, holds as an upper bound on the MSE. 
However, extensions of our results to more general loss functions, such as logistic loss for classification, requires additional work. Whether the improved rate~\eqref{eq:TKRR:optimal:rate} for TKRR is minimax optimal for $\gamma > 1$, appears to be open, although we conjecture it to be the case.
Due to the theoretical nature of this contribution, no negative societal impact of our work is anticipated.

\section{Proofs}
\label{proofs}

Here, we only give the proof of Theorem~\ref{thm:poly:rate}. The remaining proofs are given in Appendix~\ref{app:remaining:proofs}.

\vspace{-1ex}
\begin{proof}[Proof of Theorem~\ref{thm:poly:rate}]
	To simplify the argument, we assume that $\lambda^{-1/\alpha}$ is an integer, without loss of generality. Recall that for a decreasing function $f$, we have $\sum_{i=L}^U f(i) \le \int_{L-1}^U f(x) dx$. Let us first consider the sum involved in the variance term. 
	We will use the following inequalities, $\frac12 \min(a,b) \le \frac{ab}{a+b} \le \min(a,b)$ which holds for $a,b \ge 0$. We have
	\begin{align*}
			I_1 := \sum_{i=1}^r \frac{\mu_i^2}{(\mu_i + \lambda)^2}  = 
			\frac1{\lambda^2}\sum_{i=1}^r \frac{\mu_i^2\lambda^2}{(\mu_i + \lambda)^2} \asymp
			\frac1{\lambda^2}\sum_{i=1}^r \min(\lambda^2,\mu_i^2).
	\end{align*}
	Since $\mu_i = i^{-\alpha}$, the minimum above is $\lambda^2$  for $i \in [1,\lambda^{-1/\alpha}]$, and $\mu_i^2$ for $i > \lambda^{-1/\alpha}$. If $r \le \lambda^{-1/\alpha}$, only the first case happens and the overall bound is $\frac{1}{\lambda^2} r \lambda^2 = r$. If $r > \lambda^{-1/\alpha} =: k$, we have 
	\begin{align*}
		I_1 \asymp \frac1{\lambda^2} \Bigl( k \lambda^2 + \sum_{i=k+1}^r i^{-2\alpha}\Bigr) = k + \frac1{\lambda^2} \int_k^{r} x^{-2\alpha} dx \lesssim k + \lambda^{-2} k^{-2\alpha+1} \lesssim \lambda^{-1/\alpha}
	\end{align*}
	To summarize, $I_1 \lesssim \min(r, \lambda^{-1/\alpha})$ and we note that the variance term in~\eqref{eq:mse:expr} is $(\sigma^2/n) I_1$.
	
	For the middle term in~\eqref{eq:mse:expr}, we have, assuming $r < n$,
	\begin{align*}
		\sum_{i=r+1}^n (\xis_i)^2 = \sum_{i=r+1}^n i^{-2\gamma \alpha - 1} \le \int_r^\infty x^{-2\gamma \alpha-1}dx \lesssim r^{-2\gamma \alpha}.
	\end{align*}
	Finally,  for the first term in~\eqref{eq:mse:expr}, we can write
	\begin{align*}
		I_2 := \sum_{i=1}^r \frac{\lambda^2}{(\mu_i + \lambda)^2} (\xis_i)^2  = 
			\sum_{i=1}^r \frac{\lambda^2 \mu_i^2}{(\mu_i + \lambda)^2} \Bigl(\frac{\xis_i}{\mu_i}\Bigr)^2  \asymp
			\sum_{i=1}^r  \min(\lambda^2,\mu_i^2)\Bigl(\frac{\xis_i}{\mu_i}\Bigr)^2.
	\end{align*}
	If $r \le \lambda^{-1/\alpha}$, we have
	$	I_2 \le \lambda^2 \sum_{i=1}^r \bigl(\frac{\xis_i}{\mu_i}\bigr)^2 \asymp \lambda^2 \sum_{i=1}^r i^{-2(\gamma -1)\alpha -1}.$
	Letting $\beta = 2(\gamma-1)\alpha$,  %
	\begin{align*}
		\frac{1}{\lambda^2} I_2 \le 1 + \sum_{i=2}^r i^{-\beta -1} \le 1 + \int_1^r x^{-\beta-1}dx = 1 + \frac{1}{\beta}(1 - r^{-\beta}).
	\end{align*} 
	If $\gamma \ge 1$, the sum above is $\lesssim 1$. If $\gamma < 1$, then $\beta < 0$ and the sum is $\lesssim r^{|\beta|}$. The two cases can be compactly written as $\lesssim \max(1,r^{-\beta})$.
	If $r > \lambda^{-1/\alpha} := k$, we have
	\begin{align*}
		I_2 \;\le\; \lambda^2 \sum_{i=1}^k \Bigl(\frac{\xis_i}{\mu_i}\Bigr)^2 + \sum_{i=k+1}^r (\xis_i)^2 \;\lesssim\; \lambda^2 \max(1,k^{-\beta}) + k^{-2\gamma \alpha}
	\end{align*}
	using the bounding techniques we have seen before. Plugging in $k = \lambda^{-1/\alpha}$, we obtain
	\begin{align*}
		I_2 \;\lesssim\;  \lambda^2 \max(1,\lambda^{2(\gamma-1)}) + \lambda^{2\gamma} \lesssim \max(\lambda^2 ,\lambda^{2\gamma})
	\end{align*}
	To summarize, for $r \le \lambda^{-1/\alpha}$, we have $I_2 \lesssim \lambda^2 \max(1,r^{-\beta})$ while for $r > \lambda^{-1/\alpha}$, we have $I_2 \lesssim  \lambda^2 \max(1 ,\lambda^{2(\gamma-1)})$. Note that these two expressions can be combined into one as follows:
	\begin{align*}
		I_2 \; \lesssim \lambda^2 \max(1 , \eta^{-\beta}),
	\end{align*}
	where $\eta = \min(r, \lambda^{-1/\alpha})$. Putting the pieces together gives the desired bound. The above upper bounds are sharp up to constants, since in each step, there is a corresponding matching lower bound. %
	
	\textbf{Proof of part~(a).} With the given choice for $\lambda$ and $r$, we have $\lambda^{-1/\alpha} \asymp (n /\sigma^2)^{\gamma/(2\gamma \alpha+1)} \ge (n/\sigma^2)^{1/(2\gamma \alpha+1)} \asymp r$ for $\gamma \ge 1$.  Thus $\eta = \min(r, \lambda^{-1/\alpha} ) \asymp r$, hence $\eta^{-2(\gamma-1)\alpha} \lesssim 1$ for $\gamma \ge 1$ since $r \ge 1$. It follows that the 
	\begin{align*}
		\ex \empnorm{\ft_{r,\lambda} - \fs}^2 
		\;\asymp \; \lambda^2  + r^{-2\gamma \alpha} + \frac{\sigma^2}{n} r.
	\end{align*}
	Plugging in the choices for $\lambda$ and $r$, each of the three terms is $\asymp (\sigma^2/n)^{2\gamma\alpha/(2\gamma\alpha+1)}$ as desired.
	
	\textbf{Proof of part~(b).} Here, we have $r = n$, so the middle term in~\eqref{eq:poly:rate:mse} vanishes. Assume first that $\gamma \ge 1$. We consider different regularization regimes.
	
	Case 1: %
	$\lambda^{-1/\alpha} \in [1,n]$ so that $\lambda \in [n^{-\alpha},1]$
	and $\eta = \min(n,\lambda^{-1/\alpha}) = \lambda^{-1/\alpha}$. We have $\eta^{-2(\gamma-1)\alpha} = \lambda^{2(\gamma -1)} \le 1$. It follows that
	\begin{align*}
		\ex \empnorm{\ft_{r,\lambda} - \fs}^2 
		\;\asymp \; \lambda^2 +  \frac{\sigma^2}{n} \lambda^{-1/\alpha}.
	\end{align*}
	The optimal choice of $\lambda$ is obtained by setting $\lambda^2 \asymp (\sigma^2/n) \lambda^{-1/\alpha}$, that is, $\lambda = (\sigma^2/n)^{\alpha/(2\alpha+1)}$ and MSE rate $\asymp (\sigma^2/n)^{2\alpha/(2\alpha+1)}$. Note that, assuming $n^{-2\alpha} \lesssim \sigma^2 \lesssim n$, this choice of $\lambda$ is within the assumed range $[n^{-\alpha},1]$ up to constants.
	
	Case 2: $\lambda^{-1/\alpha} < 1$. Then, $\eta = 1$ and the MSE $\asymp \lambda^2 + \sigma^2/n$. Since $\sigma^2\lesssim n$ by assumption and $\lambda > 1$, we have MSE $\asymp \lambda^2 > 1$ 
	which is always worse than the rate in Case 1. %
	
	Case 3: $\lambda^{-1/\alpha} > n$. Then, $\eta = n$ and MSE $\asymp \lambda^2 + \sigma^2 \asymp \sigma^2$ since $\sigma^2 \gtrsim n^{-2\alpha}$ by assumption and $\lambda^2 < n^{-2\alpha}$. This rate is clearly worst than Case~1. Putting the pieces together the optimal rate is achieved by Case~1.
	
	Next, consider the case $\gamma < 1$. We again consider the three regularization regimes:
	
	Case 1: %
	$\lambda^{-1/\alpha} \in [1,n]$ so that $\lambda \in [n^{-\alpha},1]$
	and $\eta = \min(n,\lambda^{-1/\alpha}) = \lambda^{-1/\alpha}$. We have $\eta^{-2(\gamma-1)\alpha} = \lambda^{2(\gamma -1)} > 1$. It follows from~\eqref{eq:poly:rate:mse} that
	\begin{align*}
		\ex \empnorm{\ft_{r,\lambda} - \fs}^2 
		\;\asymp \;  \lambda^{2\gamma}+  \frac{\sigma^2}{n} \lambda^{-1/\alpha}.
	\end{align*}
	The optimal choice of $\lambda$ is obtained by setting $\lambda^{2\gamma} \asymp (\sigma^2/n) \lambda^{-1/\alpha}$, that is, $\lambda = (\sigma^2/n)^{\alpha/(2\gamma \alpha+1)}$ and MSE rate $\asymp (\sigma^2/n)^{2\gamma\alpha/(2\gamma \alpha+1)}$. Note that, as long as  $n^{-2\gamma \alpha } \lesssim \sigma^2 \lesssim n$ (which holds by the assumption on $\sigma^2$ since $\gamma < 1$), this choice of $\lambda$ is within the assumed range $[n^{-\alpha},1]$ up to constants.
	
	Case 2: $\lambda^{-1/\alpha} < 1$. This is similar to Case~2 when $\gamma \ge 1$.
	
	Case 3: $\lambda^{-1/\alpha} > n$. Then, $\eta = n$ and $\eta^{-2(\gamma-1)\alpha} > 1$, hence MSE $\asymp \lambda^2 n^{-2(\gamma-1)\alpha} + \sigma^2 \gtrsim \sigma^2$ which is clearly worst than the rate in Case~1. Putting the pieces together the optimal rate is achieved by Case~1.
\end{proof}

\bibliographystyle{plain}
\bibliography{krr_refs, kernel_refs}

\appendix

\section{Verifying reproducing property}\label{app:verify:reproduce}
To see the reproducing property of the kernel $\Kert$, defined in~\eqref{eq:Kt:def}, for the space $\hilt$, note that for any $f = \sum_\ell \alpha_\ell \psi_\ell \in \hilt$, we have 
\begin{align*}
	\hiltip{f, \Kert(\cdot,y)} %
	=\sum_{\ell=1}^r\sum_{k=1}^r \alpha_\ell \mu_k \psi_k(y)  \hiltip{\psi_\ell, \psi_k} = \sum_{k=1}^r  \alpha_k \psi_k(y) = f(y).
\end{align*}

\section{Remaining proofs}\label{app:remaining:proofs}

\begin{proof}[Proof of Proposition~\ref{prop:TKRR:equivalence}]

	We write $y = (y_1,\dots,y_n)$ and $w = (w_1,\dots,w_n)$. Then, the model can be compactly written as $y = \sqrt n S_{\xv}(\fs) + w$.
	Let $\yt = y / \sqrt n$ and $\wt = w / \sqrt n$ so that $\yt = S_{\xv}(\fs) + \wt$.
	
	By the representer theorem, a general TKRR solution is $\ft_{r, \lambda} = \St_{\xv}^*(\omt)$ where 
	$\omt$ is a solution of 
	\begin{align}\label{eq:Kt:optim}
		\min_{\omega \,\in\, \reals^n} \; 
		\frac1{n} \vnorm{y- \sqrt n \Kt \omega}^2 + \lambda
		\omega^T \Kt \omega.
	\end{align}
	The first-order optimality condition gives $\Kt[(\Kt \omt - \yt) + \lambda  \omt ] = 0$. Let us write $K = U \Lambda U^T$ for the eigen-decomposition of the full kernel matrix, where $\Lambda = \diag(\mu_1,\dots,\mu_n)$ and $U$ has columns $u_1,\dots,u_n$. Let $U_1 = [u_1 \mid u_2 \mid \cdots \mid u_r] \in \reals^{n \times r}$, $U_2 = [u_{r+1} \mid \cdots \mid u_n] \in \reals^{n \times (n-r)}$ and $\Lambda_1 = \diag(\mu_1,\dots,\mu_r) \in \reals^{r \times r}$. Then, $\Kt = U_1 \Lambda_1 U_1^T$ and $\omt = U_1 \alpha + U_2 \beta$ for some vector $\alpha$ and $\beta$. Substituting into the first-order condition and noting $U_1^T U_1 = I_r$ and $U_1^T U_2 = 0$, we have
	\begin{align*}
		U_1 \Lambda_1  [ ( \Lambda_1 \alpha - U_1^T\yt) + \lambda \alpha] = 0
	\end{align*}
	Let $\xi_{(1)} =  U_1^T\yt$. Multiplying both sides of the above by $\Lambda_1^{-1} U_1^T$, we obtain
	$(\Lambda_1 \alpha - \xi_{(1)}) + \lambda \alpha = 0.$
	Letting $A_\lambda = \Lambda_1 + \lambda I_r$, we have $\alpha = A_\lambda^{-1} \xi_{(1)}$. Thus all the solutions $\omt$ of~\eqref{eq:Kt:optim} are of the form 
	\begin{align}
		\omt = U_1 A_\lambda^{-1} \xi_{(1)} + U_2 \beta
	\end{align}
	for an arbitrary $\beta \in \reals^{n \times (n-r)}$. 
	
	Recalling $\ft_{r, \lambda} = \St_{\xv}^*(\omt)$, we have
	$\ft_{r, \lambda}(x_i) = \frac1{\sqrt n} \sum_{j=1}^n \omt_j \Kert(x_i, x_j) = \sqrt n [\Kt \omt]_i$,	
	hence
	\begin{align}\label{eq:Sx:ft}
		S_{\xv}(\ft_{r, \lambda}) = \Kt \omt = ( U_1 \Lambda_1 U_1^T) (U_1 A_\lambda^{-1} \xi_{(1)} + U_2 \beta) = U_1 \Lambda_1 A_\lambda^{-1} \xi_{(1)}
	\end{align}
	showing that $S_{\xv}(\ft_{r, \lambda})$ is the same for all the TKRR solutions. This proves parts~(a) and~(b). Part (c) is a consequence of $\Kt = K$ when $r = n$. The uniqueness also follows from the above argument since there is no $U_2\beta$ term in this case.	
	\end{proof}

	\begin{proof}[Proof of Theorem~\ref{thm:exact:mse}]
		Using the previous notation and recalling that $\xi^* = U^T S_{\xv}(\fs)$, we have 
			$\xi = U^T \yt = \xi^* + \bm z$ and $\bm z := U^T \wt$.
		Writing $\xi = U^T \yt = (\xi_{(1)}, U_2^T \yt)$, we can rewrite~\eqref{eq:Sx:ft} as
	$S_{\xv}(\ft_{r, \lambda}) = U \Gaml \xi.$
		It follows that
		\begin{align*}
			\empnorm{\ft_{r,\lambda} - \fs}^2 &= \norm{S_{\xv}(\ft_{r, \lambda}) - S_{\xv}(\fs)}_2^2  \\
			&= \norm{U \Gaml \xi  - U \xi^*}_2^2 \\
			&= \norm{\Gaml \xi  - \xi^*}_2^2 = 
			\norm{(\Gaml - I_n) \xi^*  +  \Gaml \bm z}_2^2.
		\end{align*}
		Expanding and using $\ex[\bm z] = 0$, we get
		\begin{align*}
			\ex \empnorm{\ft_{r,\lambda} - \fs}^2 = 
			\norm{(I_n - \Gaml) \xi^*}_2^2 +  \tr\bigl(\Gaml^2 \ex [\bm z \bm z^T]\bigr).
		\end{align*}
		Noting that $ \ex [\bm z \bm z^T] = \cov(\bm z) = U^T \cov(\wt) U = \frac{\sigma^2}n U^T U = \frac{\sigma^2}n I_n$ gives
			\begin{align*}
			\ex \empnorm{\ft_{r,\lambda} - \fs}^2 = 
			\norm{(I_n - \Gaml) \xi^*}_2^2 +   \frac{\sigma^2}n \tr(\Gaml^2)
		\end{align*}
		which is the desired result. The expression~\eqref{eq:mse:expr:2} is obtained by writing $\sum_{i=r+1}^n (\xis_i)^2 = \norm{\xis}_2^2 - \sum_{i=1}^r (\xis_i)^2$ and noting that $\empnorm{f} = \norm{\xis}_2$.
	\end{proof}

\begin{proof}[Proof of Proposition~\ref{prop:band:lim}]
	The expression for $\mseb$ follows by taking the expectation of both sides of~\eqref{eq:mse:expr:2} and noting that $\ex (\xis_i)^2 = 1/b$ when nonzero and $\ex \empnorm{\fs}^2 = 1$.
	
	For part (a), the assertions about the intervals $[1,\ell]$ and $[\ell+b+1,n]$ are immediate from the expression, since in both cases on the estimation error (third term) contributes to the $\mseb$ when increasing $r$. For the assertions regarding the middle interval, note that since $i \mapsto \mu_i$ is decreasing, we have $1 + \frac{2\lambda}{\mu_i} \le \frac{\sigma^2}n b$ for $i < j^*$ and $1 + \frac{2\lambda}{\mu_i} > \frac{\sigma^2}n b$ for $i \ge j^*$. The latter inequality is equivalent to $\frac1b a_i(\lambda) > \frac{\sigma^2}{n} \mu_i^2$ showing that the overall contribution of the $i$th terms of the two sums to the $\mseb$ is negative for $i > j^*$ (and by a similar argument nonnegative for $i \le j^*$.)
	
	For part (b), we note that the variable term of the $\mseb$ for $\ell \in [0,r-b]$ is 
	\begin{align}\label{eq:mse:middle:term}
		\frac1b \sum_{i=\ell+1}^{\ell+b} \frac{- a_i(\lambda)}{(\mu_i + \lambda)^2}  = \frac1b \sum_{i=\ell+1}^{\ell+b} \frac{\lambda^2}{(\mu_i +\lambda)^2} - 1.
	\end{align}
	Since $i \mapsto \frac{\lambda^2}{(\mu_i +\lambda)^2}$ is an increasing function, summing it over an sliding window of length $b$ starting at $\ell+1$, produces larger values as $\ell$ increases. 
	
	For part (c), the variable term of $\mseb$ is again~\eqref{eq:mse:middle:term} for $b \in [1,r-\ell]$. The variable part is the average of the sequence $i \mapsto \frac{\lambda^2}{(\mu_i +\lambda)^2}$ over a window of length $b$. Increasing the window length then increases the average since the sequence is increasing. 
\end{proof}

\section{Additional simulations}\label{app:add:sims}

\paragraph{Rate of TKRR vs. KRR}

\begin{figure}[t]
	\centering
	\begin{subfigure}{0.49\textwidth}
		\includegraphics[width=.99\textwidth]{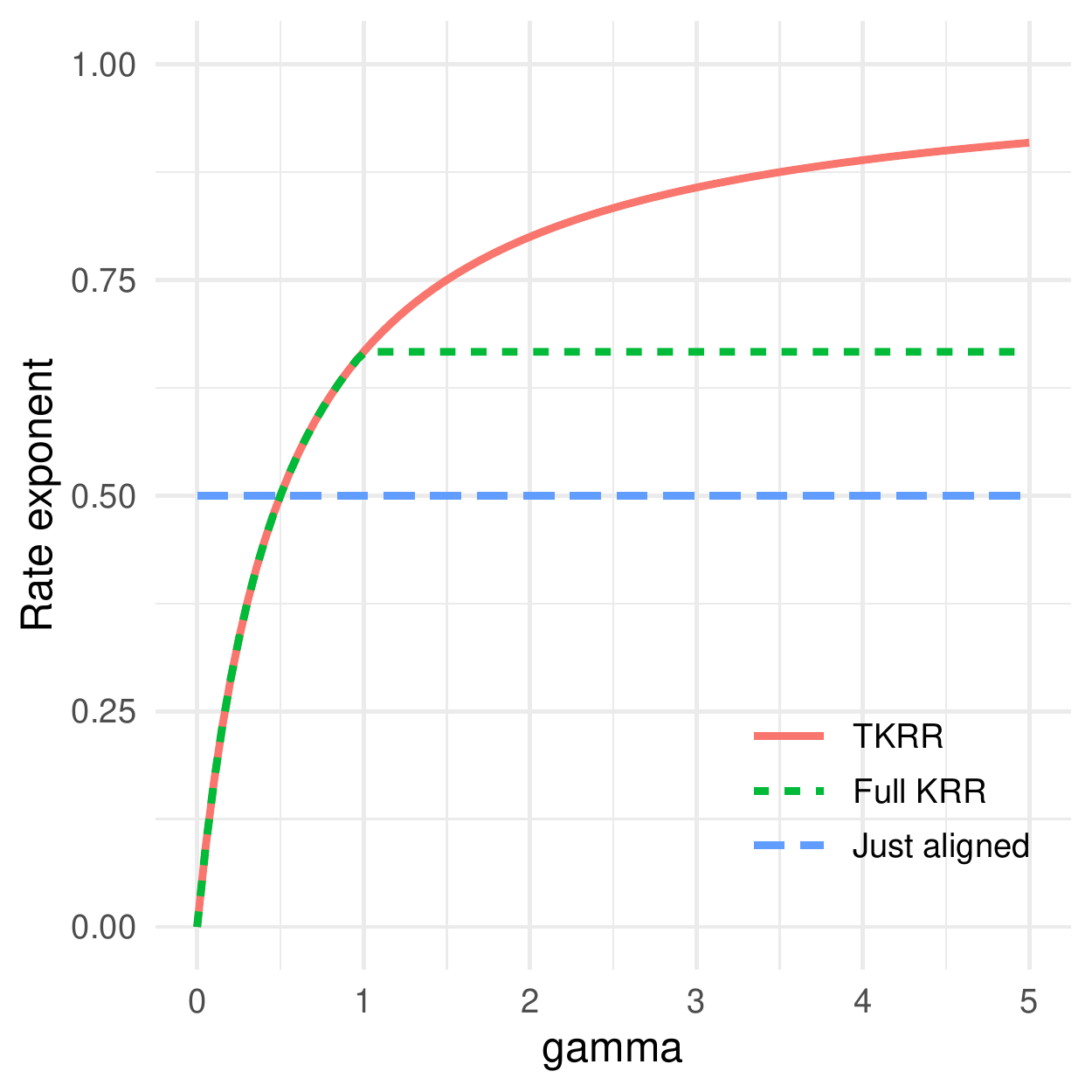} 
		\subcaption{Rate exponents}
	\end{subfigure}
	\begin{subfigure}{0.49\textwidth}
    	\includegraphics[width=.99\textwidth]{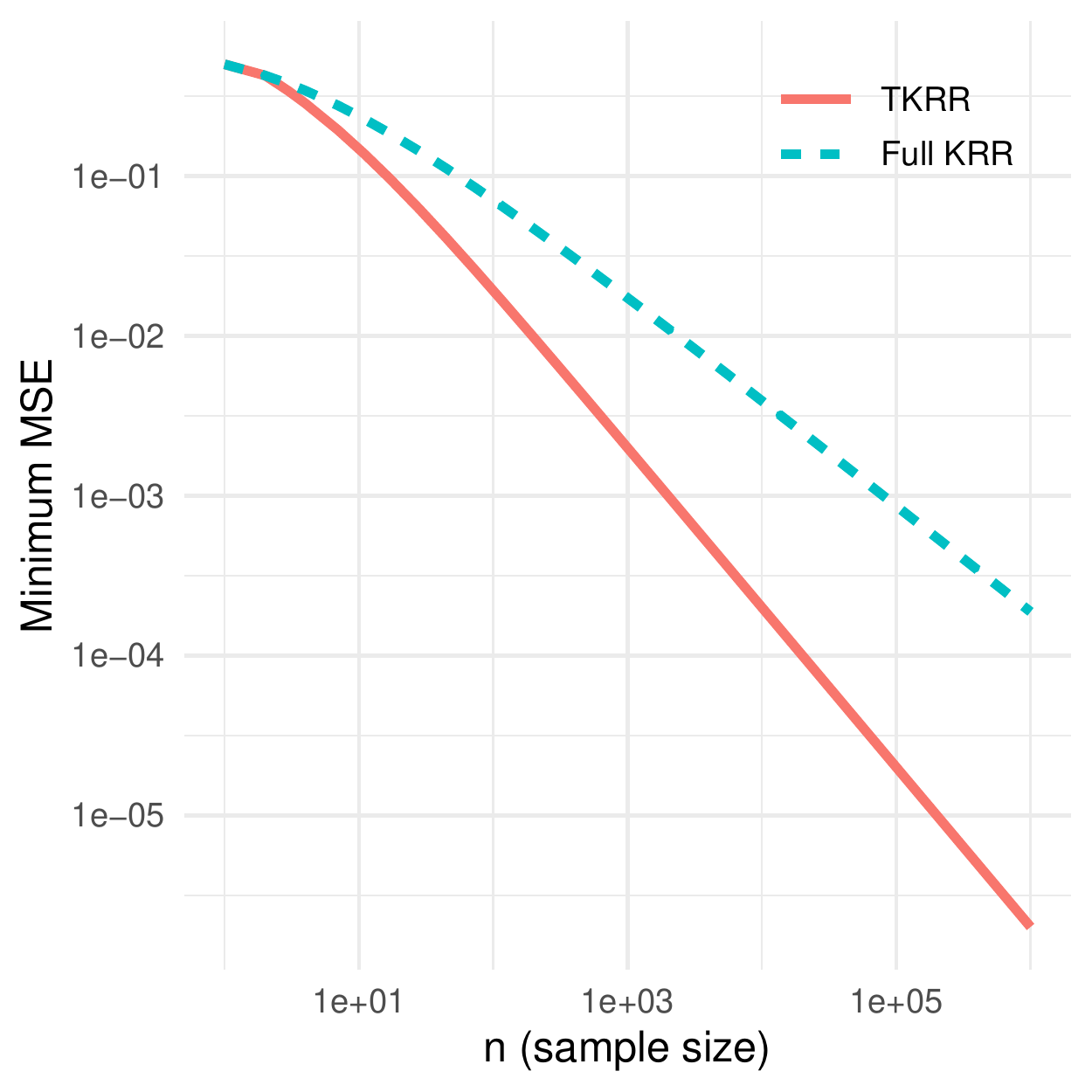}  
    	\subcaption{Minimum MSE vs. sample size}
	\end{subfigure}
	\caption{(a) The rate exponent function $s(\gamma)$ for the TKRR, Eq.~\eqref{eq:rate:exponent}, compared with that of full KRR $s(\delta) = s(1 \wedge \gamma)$ and the minimax exponent over the RKHS ball $s(1/2)$. (b) The minimum achievable MSE by TKRR and full KRR, as a function of the sample size, when $\alpha=1$ and $\gamma=10$.}
	\label{fig:basic:rate}
\end{figure}

We perform some experiments to coroborate the results of Theorem~ \ref{thm:poly:rate}. 
We let the eigenvalues and TA scores decay polynomially with %
rates specified as in~\eqref{eq:poly:decay:assump}, and take the truncation parameter $r$ to be as derived in Theorem~\ref{thm:poly:rate}\ref{part:tkrr:rate}.  For the full KRR and TKRR, we calculate the respective minimum value of MSE among 1000 values of the regularization parameter $\lambda$, evenly distributed between $10^{-10}$ and $10^2$. Figure~\ref{fig:basic:rate}(b) shows the MSE for the two methods, when $\alpha = 1$ and $\gamma = 10$, as a function of the sample size, on a log-log scale,. The difference in slope clearly shows the difference in rate between the two approaches.

The plots in Figure~\ref{fig:log:MSE:diff} show the difference in minimum log(MSE) between the full KRR and TKRR versus the sample size (on the log-scale), for different combinations of decay rate $\alpha$ and the noise level $\sigma$. For all the plots, we have $\gamma = 5$. According to Theorem~\ref{thm:poly:rate}, for sufficiently large $n$, the difference in minimum log(MSE) between the full KRR and TKRR should follow a line with positive slope when plotted as a function of $\log n$. This is clearly shown in  Figure~\ref{fig:log:MSE:diff}, where the positivity of the slope signifies the difference in rates between the two methods.

\begin{figure}[t]
\includegraphics[width=\textwidth]{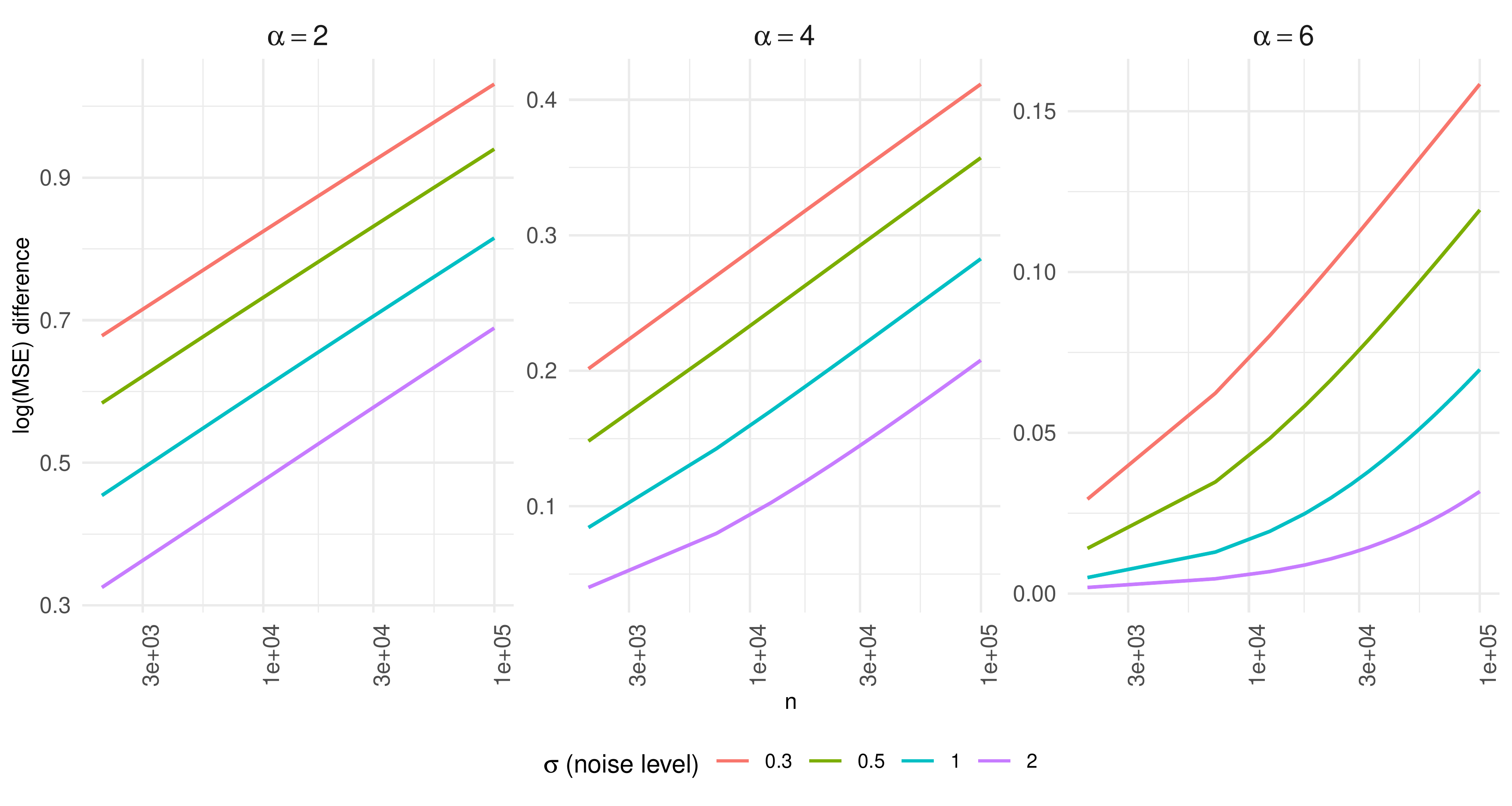}
\caption{The difference of log(MSE) between the full KRR and TKRR versus the sample size $n$ for $\gamma = 5$, and various values of $\alpha$ and the noise level $\sigma$. }
\label{fig:log:MSE:diff}
\end{figure}

\section{RKHS background}\label{app:rkhs:background}
 Assume that $\Xc$ is a measurable space with a $\sigma$-finite measure $\mu$ and $\hil$ is a separable RKHS over $\Xc$ with a measurable kernel $\Ker :\Xc \times \Xc \to \reals$. We write $L^2 := L^2(\mu)$ for the $L^2$ space of functions from $\Xc$ to $\reals$. For simplicity, we write $\ltwonorm{\Ker}$ for the $L^2$ norm of the function $x\mapsto \sqrt{\Ker(x,x)}$. We assume 
\begin{align}\label{eq:ker:l2:assump}
	\ltwonorm{\Ker} < \infty.
\end{align}
Then $\hil$ is a subset of $L^2$ and the inclusion map $J : \hil \to L^2$ is continuous. The adjoint of this map $J^* :L^2 \to \hil$ is the following integral operator
\begin{align*}
	J^* f(x) = \int \Ker(\cdot,x) f d\mu = \ltwoip{\Ker(\cdot,x), f} \quad f \in L^2.
\end{align*}
Let $T = JJ^* : L^2 \to L^2$. This can be thought of a the same integral operator acting on $L^2$ with output in $L^2$. The decomposition $T =JJ^*$ shows that $T$ is self-adjoint and positive. Condition~\eqref{eq:ker:l2:assump} implies that $T$ is a Hilbert-Schmidt, and hence a compact, operator. 

The spectral theorem for self-adjoint compact operators on $L^2$ implies that 
\[
	T f = \sum_{i \in I} \lambda_i e_i \ltwoip{f, e_i} \quad \text{ for all $f\in L^2$}
\]
where $\{\lambda_i\}_{i \in I}$ are the non-zero eigenvalues of $T$ order in decreasing fashion and $\{e_i\}_{i \in I} \subset L^2$ a corresponding sequence of eigenvectors (at most countable), forming an orthonormal system (ONS) in $L^2$. That is, $T e_i = \lambda_i e_i$ and $\ltwoip{e_i, e_j} = 1\{i = j\}$.

One can also view $\{e_i\}$ as functions in $\hil$, and it is not hard to see that $\{e_i\}_{i \in I}$ is an orthogonal sequence in $\hil$ with $\hilnorm{e_i}^2 = 1/{\lambda_i}$. That is, $\hip{e_i, e_j} = 1\{i = j\}/{\lambda_i}$. In other words, $\{\sqrt{\lambda_i} e_i\}_{i \in I}$ is an ONS in $\hil$.

Assume from now on that we are dealing with a Mercer kernel $\Ker$, that is, $\Xc$ is a compact space and $\Ker$ is a continuous function.
Then, we have the Mercer decomposition of the kernel function
\begin{align}
	K(x,y) = \sum_i \lambda_i e_i(x) e_i(y), \quad \forall x, y \in \Xc
\end{align}
where the convergence is uniform and absolute. It then follows that $\{\sqrt{\lambda_i} e_i\}$ is an orthonormal basis (ONB) of $\hil$ and we have
\begin{align*}
	\hil = \Bigl\{\sum_i \alpha_i e_i \mid \sum_i \frac{\alpha_i^2}{\lambda_i} < \infty\Bigr\}.
\end{align*}
The treatment up to this point follows more or less the treatment in~\cite[Chapter~4]{steinwart2008support}.

From now on, we patch the sequence $\{e_i\}_{i \in I}$ to a complete orthonormal basis for the entire $L^2$ namely $\{e_i\}_{i \in I'}$ where $I'$ is a proper subset of $I$. Let $I_0 := I'\setminus I$. Then, $e_i, i \in I_0$ span the orthogonal complement of the image of $T$ (i.e., the null space of $T$). We let $\lambda_i = 0$ for $i \in I_0$, so that 
\[
T f = \sum_{i \in I'} \lambda_i e_i \ltwoip{f, e_i} \quad \text{ for all $f\in L^2$}
\]
still holds. The statement $\hilnorm{e_i}^2 = \frac1{\lambda_i}$ also hold over $i \in I'$, interpreting $1/0$ as $\infty$. That is, $\hilnorm{e_i} = \infty$ when $i \in I_0$, consistent with the fact that such $e_i$ are not in $\hil$ (or more precisely do not have a version that is in $\hil$).

\subsection{Target alignment}
With this notation, every function in $L^2$ has a decomposition of the form $f = \sum_{i \in I'} \alpha_i e_i$ where $\alpha_i = \ltwoip{f,e_i}$. Then, the RKHS $\hil$ consists of those $f$ for which
\begin{align}\label{eq:hil:norm:alpha}
	\hilnorm{f}^2 = \sum_{i \in I'} \frac{\alpha_i^2}{\lambda_i} < \infty.
\end{align}
One can think of either $\{\alpha_i\}_{i \in I}$ or $\{\alpha_i\}_{i \in I'}$ as the population level kernel alignment spectrum (that is, the population counterpart of Definition~\ref{defn:TA}). Note that if $\alpha_i$ is nonzero for any $i \in I_0$, then $\hilnorm{f} = \infty$ and that $f$ is not in $\hil$. Even if $\alpha_i = 0$ for all $i \in I_0$,  $\{\alpha_i\}_{i \in I}$ needs to decay as imposed in~\eqref{eq:hil:norm:alpha} for the function to belong to the RKHS. For example, a necessary condition is $\alpha_i = o(\sqrt{\lambda_i})$ for $i \in I_0$. In other words, belonging to the RKHS itself implies some amount of alignment between the target and the kernel (i.e. some level of decay for $\{\alpha_i\}$.)

To summarize, we can write 
\begin{align*}
	\hil = \Bigl\{ f \in L^2 \mid \l \sum_{i \in I'} \frac{\ltwoip{f,e_i}^2}{\lambda_i} < \infty \Bigr\}.
\end{align*}

Let us connect to the setup of~\cite{caponnetto2007optimal} and~\cite{cui2021generalization}. In short, these two papers impose the following condition 
\begin{align}\label{eq:c:decay}
	 f = \sum_{i\in I'} \alpha_i e_i, \quad  \sum_{i \in I'} \frac{\alpha_i^2}{\lambda_i^c} < \infty
\end{align}
for some $c \in [1,2]$. If $c = 1$ this just means that $f \in \hil$. If $c > 1$ it means that it is in a proper subset of $\hil$. The $c$ here is the same as the $c$ in \cite{caponnetto2007optimal} and we have $c = 2r$ for parameter $r$ used in~\cite{cui2021generalization}. In our notation in this paper, $c = 2 \gamma$. (Note that in our paper, $r$ is reserved to the spectral truncation level and is a different parameter.)

In addition  \cite{caponnetto2007optimal} assumes $\lambda_i \asymp i^{-b}$ which is the same as the condition in~\cite{cui2021generalization}, that is, $\lambda_i \asymp i^{-\alpha}$, for $\alpha = b$. Here, our notation matches that of~\cite{cui2021generalization}; see~\eqref{eq:poly:decay:assump} which is the empirical counterpart of $\lambda_i \asymp i^{-\alpha}$.  Also, \cite{caponnetto2007optimal}  consider the case where $\lambda_i$ drop to zero exactly after some point (finite RKHS) which they refer to as the case $b = \infty$.

\subsection{Details of matching the setups}\label{app:rate:details}
The conditions~\cite{caponnetto2007optimal, cui2021generalization} are not stated as cleanly as~\eqref{eq:c:decay}. Let us see how they can be reformulated in this equivalent fashion. The paper~\cite{caponnetto2007optimal} which seems to be the origin of this condition works in the abstract setting of vector-valued RKHSs. We adapt the notation to the scalar-valued RKHSs. They work with operator $K_x: \reals \to \hil$ whose adjoint $K_x^* : \hil \to \reals$ is given by $K^*_x f= f(x)$ for every $f \in \hil$. We then have $a K^*_x f = \hip{K_x a, f}$ for any $a \in \reals$ by the definition of an adjoint operator. Since $a K^*_x f = a f(x) = \hip{a \Ker(\cdot,x), f}$, it follows that
\begin{align*}
	K_x a = a \Ker(\cdot,x), \quad a \in \reals.
\end{align*}
Then, they define the operator $T_x := K_x K_x^*: \hil \to \hil$ and $T = \int_{\Xc} T_x d\mu(x)$. We have 
\begin{align*}
		\ltwoip{e_i, T_x e_j} = \ltwoip{e_i, K_x K_x^{*} e_j} &= \ltwoip{e_i, K_x e_j(x)}  \\
	&= \ltwoip{e_i, e_j(x) K(\cdot, x)} \\
	&= e_j(x)   \ltwoip{e_i, K(\cdot, x)} = e_j(x) \lambda_i e_i(x)
\end{align*}
where the last step is since $e_i$ is an eigenvector of the integral operator $f \mapsto (x \mapsto \ltwoip{f, K(\cdot, x)})$ from $L^2$ to $L^2$, and that the range of this operator is in fact in $\hil$ (so evaluations make sense). This also follows from the Mercer decomposition.
It then follows that
\begin{align*}
	\ltwoip{e_i, T e_j} = \int 	\ltwoip{e_i, T_x e_j} d\mu(x) =\lambda_i \int e_i(x) e_j(x) d \mu(x) =
	\lambda_i \ltwoip{e_i, e_j} = \lambda_i 1\{i = j\}.
\end{align*}
That is $T$ can be viewed as a diagonal matrix $T = \diag(\lambda_i, i \in I')$ in the basis $\{e_i\}_{i \in I'}$. 

The condition in~\cite{caponnetto2007optimal} is $f = T^{(c-1)/2} g$ where $g \in \hil$ (or more precisely $\hilnorm{g}^2 \le R$). Let us write $g = \sum_{i \in I'} \beta_i e_i$ and $f = \sum_{i \in I'} \alpha_i e_i$. Since $T^{(c-1)/2}$ is a diagonal matrix in this basis, we have $\alpha_i = \lambda_i^{(c-1)/2}\beta_i$ or equivalently $\beta_i = \lambda_i^{(1-c)/2} \alpha_i$.
Then, $g \in \hil$ iff $\sum_i \beta_i^2 / \lambda_i < \infty$ which is equivalent to
\begin{align*}
	\sum_i \frac1{\lambda_i} (\lambda_i^{(1-c)/2})^2 \alpha_i^2 < \infty \iff 
	\sum_i \frac{ \alpha_i^2}{\lambda_i^c} < \infty
\end{align*}
and this is the desired condition.

Now to see that the condition in~\cite{cui2021generalization} is the same with $2r = c$, note that they require $\hilnorm{\Sigma^{1/2-r} \theta^*} < \infty$ in their equation~(7) which is a typo and is meant to be $\norm{\Sigma^{1/2-r} \theta^*}_{\ell^2} <\infty$ in the $\ell^2$ sequence norm. Here $\Sigma = \diag(\lambda_i)$ in our notation.

 As for $\theta^* = (\theta_i^*)$ which is a sequence in $\ell^2$, it is defined by the expansion $f^* = \sum_i \theta_i^* \psi_i$ where $\psi_i = \sqrt{\lambda_i} e_i$ in our notation. Thus, if we let $f = \sum_i \alpha_i e_i$, then $\alpha_i = \sqrt{\lambda_i} \theta_i^*$. So the condition imposed in~\cite{cui2021generalization} is 
\begin{align*}
	\sum_i (\lambda_i^{1/2 - r} \theta^*_i)^2 < \infty
	\iff \sum_i (\lambda_i^{1/2 - r} \lambda_i^{-1/2} \alpha_i)^2 < \infty \iff 
	\sum_i \lambda_i^{-2r} \alpha_i^2 < \infty
\end{align*}
which is the desired condition with $c = 2r$.

Immediately after stating this condition in~\cite{cui2021generalization}, it is abandoned in favor of the  condition $\lambda_i^{-2r} \alpha_i^2 \asymp i^{-1}$ which gives, together with $\lambda_i \asymp i^{-b}$
\begin{align*}
    \alpha_i \asymp i^{-1/2}\lambda_i^{r} \asymp  i^{-\frac{1 + 2r b}{2}}.
\end{align*}
or equivalently $\theta_i^* \asymp  \lambda_i^{-1/2} \alpha_i \asymp  O(i^{-\frac{1 + b(2r-1)}{2}})$. This is condition~(8) in~\cite{cui2021generalization}.

\subsection{Minimax rates}
Theorem~1 and~2 in~\cite{caponnetto2007optimal} together establish that the minimax rate for the signal model~\eqref{eq:c:decay} when $c \in (1,2]$ is given by $(1/\ell)^{bc/(bc + 1)}$, where $\ell$ is the sample size. Moreover, the same rate is minimax for $c = 1$ up to logarithmic factors. Translating to our notation with $c = 2\gamma$, $\ell = n$ and $b = \alpha$, the minimax rate in our model is $(1/n)^{2\gamma\alpha/(2\gamma \alpha + 1)}$ when $\gamma \in (1/2,1]$, as claimed in Section~\ref{subsec:polyAli}.

\end{document}